\def\eqref#1{equation~\ref{#1}}
\def\1{\bm{1}}
\DeclareMathAlphabet{\mathsfit}{\encodingdefault}{\sfdefault}{m}{sl}
\SetMathAlphabet{\mathsfit}{bold}{\encodingdefault}{\sfdefault}{bx}{n}
\newcommand{\supp}{\mathrm{supp}}
\def\RR{\mathbb{R}}
\def\EE{\mathbb{E}}
\def\cC{\mathcal{C}}
\def\cF{\mathcal{F}}
\def\cX{\mathcal{X}}
\def\cU{\mathcal{U}}
\newcommand{\wt}{\widetilde}
\newcommand{\mat}[1]{#1}
\newcommand{\norm}[1]{\left\|#1\right\|}
\newcommand{\normop}[1]{\left\|#1\right\|_{\mathrm{op}}}
\newtheorem{thm}{Theorem}[section]
\newtheorem{lem}{Lemma}[section]
\newtheorem{prop}{Proposition}[section]
\newtheorem{asmp}{Assumption}[section]
\newtheorem{defn}{Definition}[section]
\newtheorem{rem}{Remark}[section]
\newcommand{\one}{\mathbb{I}}
\newcommand{\metacontroller}{decoder}
\newenvironment{itemize*}%
{\begin{itemize}[leftmargin=*,topsep=0pt]%
		\setlength{\itemsep}{0pt}%
		\setlength{\parskip}{0pt}}%
	{\end{itemize}}
\newenvironment{enumerate*}%
{\begin{enumerate}[leftmargin=*,topsep=0pt]%
		\setlength{\itemsep}{0pt}%
		\setlength{\parskip}{0pt}}%
	{\end{enumerate}}
\title{Continuous Control with Contexts, Provably}
\author{}
\begin{document}
\author{Simon S. Du \\
Institute for Advanced Study \\
\texttt{ssdu@ias.edu }
\and Ruosong Wang \\
Carnegie Mellon University \\
\texttt{ruosongw@andrew.cmu.edu}
\and Mengdi Wang\\
Princeton University\\
\texttt{mengdiw@princeton.edu}
\and Lin F. Yang \\
University of California, Los Angeles \\
\texttt{linyang@ee.ucla.edu}
}
\date{}

\maketitle

\begin{abstract}
A fundamental challenge in artificial intelligence is to build an agent that generalizes and adapts to \emph{unseen} environments. A common strategy is to build a \metacontroller~that takes the context of the unseen new environment as input and generates a policy accordingly. The current paper studies how to build a \metacontroller~for the fundamental continuous control task, linear quadratic regulator (LQR), which can model a wide range of real-world physical environments. We present a simple algorithm for this problem, which uses upper confidence bound (UCB) to refine the estimate of the \metacontroller~and balance the exploration-exploitation trade-off. Theoretically, our algorithm enjoys a $\widetilde{O}\left(\sqrt{T}\right)$ regret bound in the online setting where $T$ is the number of environments the agent played. This also implies after playing   $\widetilde{O}\left(1/\epsilon^2\right)$ environments, the agent is able to transfer the learned knowledge to obtain an $\epsilon$-suboptimal policy for an unseen environment. To our knowledge, this is first provably efficient algorithm to build a \metacontroller~in the continuous control setting. While our main focus is theoretical, we also present experiments that demonstrate the effectiveness of our algorithm.

\end{abstract}

\section{Introduction}
\label{sec:intro}
Humans are able to solve a new task \emph{without any training} based on previous experience in similar tasks.
The desired intelligent agent should be able do the same, learning from previous experience, adapting to the new ones and improving the performance as the agent gains more experience.
This is a challenging problem as we need to design an adaptation mechanism which is fundamentally different from classical supervised learning methods.

A common approach is to build a \metacontroller~so that once the agent sees a description of new task, i.e., the context of the new task, 
the \metacontroller~turns the context into a succinct representation of the new task, based on which the agent is able to design a policy to solve the task.
Note this procedure resembles how a human solves a new task.
For example, if a human wants to push an object on a table, the human first sees the object and the table (context).
Then, in his/her mind, the context becomes a representation of this task, e.g., a sense of weight of the object.
Based on this representation, the human can easily reason about how much force to exert on the object.

This general approach has been applied in practice.
For example, \citet{wu2018learning} studied the visual navigation task and built a Bayesian model that takes the context of new environments and outputs the policy that enables the agent to navigate.
\citet{killian2016transfer} used this approach to develop personalized medicine policies for HIV treatment.

While this is a promising  approach, currently we only have limited theoretical understanding.
The approach can be formulated in Contextual Markov Decision Process (CMDP) framework~\citep{hallak2015contextual}.
Recently, there is a line of work gave provable guarantees for CMDP~\citep{abbasi2014online,hallak2015contextual,dann2018policy,modi2018markov,modi2019contextual}.
These work all study tabular MDPs, and use function approximation, e.g., linear functions, generalized linear models, etc,  to model the mapping from the context to the probability transition matrix.
A major drawback of these work is that they are restricted to the tabular setting and thus can only deal with discrete environments.
Therefore, they can hardly model real-world continuous control tasks, like the task of pushsing an object as we described above.
A natural question arises:
\begin{center}
	\textbf{Can we design a provably efficient \metacontroller~for continuous control problems?}
\end{center}

In this paper, we make an important step towards answering this question.
We study the fundamental task in continuous control, linear quadratic regulator (LQR).
LQR is arguably the most widely used framework in continuous control, as LQR easily models real-world physical phenomena, e.g., the pushing object task we described earlier.
We propose a new algorithm that builds a \metacontroller, so that for a new LQR task, the \metacontroller~takes LQR's context and outputs a representation based on which the agent can infer a near-optimal policy for new continuous control tasks.
In the training phase, we build the \metacontroller~via a sequence of LQRs (in an online fashion) with unknown parameters.
For each new task, we first use the current \metacontroller~to build the representation of this task, infer a policy based on this representation and use this policy to do control for this episode.
There are two crucial components in our algorithm.
First, after each episode, we will refine the estimate of the \metacontroller~based on the observations from this episode.
Second, it is crucial to use a upper confidence bound (UCB) estimator of the \metacontroller~to build the representation so that the agent can perform a near-optimal trade-off between exploration and exploitation.
In this way, we provably show the \metacontroller~improves the performance as it experiences more training tasks.
Formally, we show our algorithm enjoys $\wt{O}\left(\sqrt{T}\right)$ regret (the difference between the cumulative rewards of our algorithm and the unknown optimal policy on every seen environment)  bound in the online setting.
Moreover, the algorithm is able to obtain an $\epsilon$-suboptimal policy for an unseen LQR environment after playing $\wt{O}\left(\epsilon^{-2}\right)$ environments.
To our knowledge, this is the first provably efficient algorithm that builds a \metacontroller~for continuous control environments.
Empirically, we simulate several physical environments to illustrate the effectiveness of our algorithm.

\paragraph{Organization}
This paper is organized as follows.
In Section~\ref{sec:rel}, we discuss related work.
In Section~\ref{sec:pre}, we formally describe the problem setup.
In Section~\ref{sec:alg}, we present our algorithm and its theoretical guarantees.
In Section~\ref{sec:exp}, we use simulation on physical environments to demonstrate the effectiveness of our approach.
We conclude in Section~\ref{sec:con}  and defer most technical proofs to the appendix.

\section{Related Work}
\label{sec:rel}
Recently there is a large body of literature focusing on learning for control in LQR systems. The first work we are aware of is \citet{fiechter1997pac} which studies the sample complexity of LQR in the offline setting.
For the online setting, where the agent can only obtain the next state starting from the present state, the first near-optimal regret bound ($\wt{O}(\sqrt{T})$) is due to~\cite{Abbasi-Yadkori2011}, which studies the learning problem in the infinite-horizon average-case cost setting. Later on, a sequence of papers~\citep{tu2017least, dean2017sample, Dean2018a, tu2018gap, abbasi2018regret, cohen2019learning} studied this problem in similar settings, improved efficiency of the algorithms and characterized the gap between model-free and model-based approaches.

Building an agent that quickly adapts to new environment has received increasing interest in the machine learning community.
\citet{taylor2009transfer} gave a summary for the literature before 2009. 
More recently, a sequence of theory papers \cite{lehnert2018transfer, spector2018sample, pmlr-v80-abel18a, lehnert2019reward} studied the transferability of reward knowledge, state-abstraction, and model features for Markov decision processes.
Please also refer to references therein for more details.
There are also some experimental works, e.g., \cite{santara2019extra, yu2018reusable, wu2018learning, gamrian2018transfer}, studying how to transfer knowledge from seen tasks to unseen tasks. 
Nevertheless, we are not aware of any study on how to provably perform continuous control with contexts.

%
%
%
%

\section{Preliminaries}
\label{sec:pre}
\paragraph{Notations.}
We begin by introducing necessary notations.
We write $[h]$ to denote the set $\left\{1,\ldots,h\right\}$.
We use $I_d\in \RR^{d\times d}$ to denote the $d \times d$ identity matrix.
We use $0_{d\times d'}$ to represent the all-zero matrix in $\RR^{d\times d'}$. 
If it is clear from the context, we omit the subscript $d\times d'$. 
Let $\norm{\cdot}_2$ denote the Euclidean norm of a vector in $\mathbb{R}^d$. 
For a symmetric matrix $\mat{A}$, let $\normop{\mat{A}}$ denote its operator norm and $\lambda_{i}\left(\mat{A}\right)$ denote its $i$-th eigenvalue.
Throughout the paper, all sets are multisets, i.e., a single element can appear multiple times. 

\paragraph{Finite Horizon Linear Quadratic Regulator.}
We now formally define the finite horizon Linear Quadratic Regulator (LQR) problem.
In the LQR problem, there is a state space $\mathcal{X}\subset \mathbb{R}^d$ and a closed action space $\mathcal{U}\subset\mathbb{R}^{d'}$.    Suppose we always start from the initial state $x_1=x_{\mathrm{init}}\in \cX$ and play for $H$ steps. Then at a state $x_h\in \mathcal{X}$, if an action $u_h\in\mathcal{U}$ is played, the next state is given by
\begin{align}\label{equ:lqr}
x_{h+1}  = A x_h + B u_h + w_{h+1},
\end{align}
where $A, B$ are matrices of proper dimension and $w_{h+1}$ is a zero-mean random vector.
Here $A,B$ can be viewed as the succinct representation of this LQR since as will be explained below, given $A,B$, one can easily infer the optimal policy for this LQR. 
 For simplicity, we denote 
\[
M = [A, B], \text{ and } y_h=[x_h^\top, u_h^\top]^\top\in \mathbb{R}^{d+d'}.
\]
Now the state transition can be rewritten as
$
x_{h+1} = My_h + w_{h+1}.
$
For the ease of presentation, we assume that the covariance matrix of noise vector $w_{h+1}$ is $\mathbb{E}(w_{h+1}w_{h+1}^\top) = I_d$. 
Our analysis follows similarly if the covariance matrix is not $I_d$ (see e.g. Remark 3 in \cite{Abbasi-Yadkori2011}). 
After each step, the player receives an immediate cost
$
x_{h}^\top Q_h x_h + u_h^\top R_h u_h,
$
where $Q_h, R_h$ are positive definite matrices of proper dimensions.  
At a terminal state $x_H$, there is no action to be played, and the player receives a terminal cost
$
x_H^\top Q_H x_H,
$
where $Q_H$ is a positive semi-definite matrix of proper dimension. 
The goal of the player is to find a policy $\pi: (\cX\times\cU)^*\times \cX\rightarrow \cU$, which is a function that maps the trajectory $\{(x_i, u_i)\}_{i=1}^{h-1}\cup \{x_{h}\}$ to the next action $u_{h}$, such that the following objectives are minimized:
\[
\bigg\{J^{\pi}_h(M, x):= \mathbb{E}\bigg[\bigg(\sum_{h'=h}^{H-1} x_{h}^\top Q_h x_h + u_h^\top R_h u_h\bigg) + x_H^\top Q_fx_H~\bigg|~ x_h = x\bigg]
\bigg\}_{h\in[H]},
\]
where the action $u_h$ is given by $u_h=\pi[(x_1, u_1), (x_2, u_2), \ldots, (x_{h-1}, u_{h-1}), x_{h}]$, and the expectation is over the randomness of $w_h$ and $\pi$.  

It is well-known that the optimal policy $\pi^*$ is Markovian \cite{puterman2014markov}, i.e., it only depends on the present state. For an unconstrained action space $\cU$, we have
\[
\forall x\in \cX, h\in [H-1]:\quad \pi_h^*(M, x):= K_h(M) x
\]
where $M = [A, B]$ and $K_h(M)$ is a matrix that will be defined shortly. It is also known (see e.g. \cite{bertsekas1996dynamic}) that the optimal cost function $J^*_h(x):=J^{\pi^*}_h(x)$ is given by
\begin{align}
J^*_h(M, x):=x^\top P_h(M) x + C_h(M)
=\inf_{\pi}J_h^{\pi}(M, x)\label{eq:optimal-value}
\end{align}
where 
\begin{align}
\label{eqn:ph}
P_h(M)= 
\begin{cases}
Q_h+ A^\top  P_{h+1}(M) A - A^\top P_{h+1} B(R_h+B^\top P_{h+1}(M) B)^{-1}B^\top P_{h+1}(M) A & h < H \\
Q_H & h = H
\end{cases}
\end{align}
and 
\[
C_h(M) = \begin{cases}
C_{h+1}(M) + \mathbb{E}_{w_{h+1}}\big[w_{h+1}^\top P_{h+1}(M) w_{h+1}] & h < H\\
0 & h = H
\end{cases}.
\]
We now define $K_h(M)$ as
\begin{align}
K_h(M):=-(R_h+B^\top P_{h+1}(M) B)^{-1}B^\top P_{h+1}(M) A.\label{eq:km}
\end{align}
Note that the optimal value Equation ($\ref{eq:optimal-value}$) satisfies Bellman equations,
\[
\forall h\in [H-1]:\quad J_h^*(M, x)=x^\top Q_h x + {\pi^*(x)}^\top R_h \pi^*(x) + \mathbb{E}\big[J_{h+1}^*(Ax+B\pi^*(x) + w)\big]
\]
and 
\[
\forall h\in [H-1]:\quad J_h^*(M, x)=x^\top Q_h x + \min_{u}\mathbb{E}[u^\top R_hu + J_{h+1}^*(Ax+Bu + w)].
\]

Now we have shown that if we are given $A$ and $B$, then we can obtain the optimal policy directly.
In this paper, we deal with setting where $A$ and $B$ are \emph{unknown} and we need to use \metacontroller~to decode $A$ and $B$ from the contexts of the current LQR, as specified below.

\paragraph{Learning to Control LQR with Contexts}
In the continuous control with contexts setting, in each episode we observe a context
\[(C, D)\sim \mu,\] where $\mu$ is a distribution on $\RR^{p\times d}\times\RR^{p'\times d'} $.
The context $[C,D]$ encodes the information of the environment.
Formally, the representation ($[A,B]$) of this environment can be decoded from the context via a decoding matrix $\Theta_* \in \mathbb{R}^{d \times (p+p')}$:
\begin{align}
	[A, B] = \Theta_*\cdot \bigg[\begin{array}{cc}C& 0_{p\times d'}\\0_{p'\times d} & D\end{array}\bigg].
\end{align}
From now on, to emphasize that the representation of LQR can be decoded from $\Theta_*$, we write
\begin{align}
\label{eqn:mtcd}
M_{\Theta_*, C, D}:=
\Theta_*\cdot \bigg[\begin{array}{cc}C& 0_{p\times d'}\\0_{p'\times d} & D\end{array}\bigg] = [A,B].
\end{align}
If it is clear from the context, we ignore $[C,D]$ for notational simplicity.
Note the optimal decoder $\Theta_*$ is unknown to the agent and the goal is to learn $\Theta_*$ from contexts and interactions with the environment.
Below we formally define the problem that we study.
\begin{defn}[Contextual Transfer Learning Problem]
Build an agent that plays on $K$ LQR games (one trajectory per game) with context pairs $\{(C^{(1)}, D^{(1)}), (C^{(2)}, D^{(2)}), \ldots, (C^{(K)}, D^{(K)})\}\sim \mu$, for some integer $K\ge 0$ such that for another new context pair $(C, D)\sim \mu$, the agent outputs a policy $\pi$ based on $(C,D)$ which satisfies
\[
\EE[J^{\pi}_h(M_{\Theta_*, C, D}, ~x_1) - J^{*}_h(M_{\Theta_*, C, D}, ~x_1)] \le \epsilon
\]
for some given target accuracy $\epsilon > 0$. 
\end{defn}
Here $K$ is the sample complexity which ideally scales \emph{polynomially} with $1 / \epsilon$ and problem-dependent parameters.
The performance of the agent can also be measured by regret, as defined below.
\begin{align}
\label{eq:reg}\mathrm{Regret}(KH):= \sum_{k=1}^{K}J_1^{\wt{\pi}^{(k)}}\Big(M_{\Theta_*, C^{(k)}, D^{(k)}}, ~x_1\Big) - J_1^*\Big(M_{\Theta_*, C^{(k)}, D^{(k)}}, x_1\Big),
\end{align}
where $\wt{\pi}^{(k)}$ is the policy played at episode $k$ by the agent.
This quantity measurse the sub-optimality of policies the agent played in the first $K$ episodes.

\begin{rem}
We consider matrix-type linear maps from context to the representation only for sake of presentation.
Our algorithm and analysis can be readily extended to other linear maps, e.g.,
$
[A_*(C), B_*(D)] : = f(C,D)
$ 
for some unknown linear function $f$.
\end{rem}

\section{Main Algorithm}
\label{sec:alg}
In this section, we first describe the algorithm and then present its sample complexity guarantees.
\paragraph{Algorithm}
We describe the high-level idea of the algorithm below.
The agent maintains a \metacontroller~that maps the context $(C,D)$ to the representation $(A,B)$.
We denote $\Theta^{(k)}$ the \metacontroller~at the $k$-th episode.
Initially, we know nothing about $\Theta_*$, so we initialize our \metacontroller~by setting $\Theta^{(1)} = 0\in \mathbb{R}^{d\times p}$. 
At the $k$-th episode,  the agent plays policy $\pi^{(k)}$ and in each time step $h\in [H-1]$, it collects data 
\[
x^{(k)}_h, u^{(k)}_h, x^{(k)}_{h+1}, \quad z^{(k)}_{h}\gets\bigg[ \begin{array}{c}
\vspace{1mm}
C^{(k)}{x^{(k)}_h}\\
D^{(k)}{u^{(k)}_h}
\end{array}\bigg],
\]
where $z^{(k)}_h$ can be viewed as the \emph{context regularized} observation.
We now describe how to obtain policy $\pi^{(k)}$. We first solve the following optimization problem
\[
\widetilde{\Theta}^{(k)} = \arg\min_{\Theta\in \cC^{(k)}} J_1^*
\Big(M_{\Theta, C^{(k)}, D^{(k)}}, x_1^{(k)}\Big),
\]
where $J_1^*$ is given by Equation (\ref{eq:optimal-value}), 
and the confidence set $\def\cC{\mathcal{C}}\cC^{(k)}$ will be defined shortly.
$\cC^{(k)}$ represents our confidence region on $\Theta_*$.
Since we choose the one that minimizes the cost, this represents the principle ``optimism in the face of uncertainty" and it is the key to balance exploration and exploitation which will be clear in the proof.
 Notice that the above optimization problem is a polynomial optimization problem. Then the policy is given by  
\[
\pi^{(k)}_h(x):= K_h\Big(M^{(k)}\Big)\cdot x
\text{ where }
M^{(k)} =M_{\Theta^{(k)}, C^{(k)}, D^{(k)}}:= \Theta^{(k)}\cdot \bigg[\begin{array}{cc}C^{(k)}& 0\\0 & D^{(k)}\end{array}\bigg],
\]
and $K_h$ is given by Equation~(\ref{eq:km}).
After episode $k\in [K]$, we use the following ridge regression to the update \metacontroller
\[
\Theta^{(k)}  = 	\left(\Big(V^{(k+1)}\Big)^{-1}W^{(k+1)}\right)^\top,
\]
where 
\[
V^{(k+1)} = I +\sum_{k'=1}^k\sum_{h=1}^{H-1}z^{(k')}_h z^{(k')\top}_h\quad\text{and}\quad
W^{(k+1)}
= \sum_{k'=1}^k\sum_{h=1}^{H-1}z^{(k')}_h x^{(k')\top}_{h+1}.
\]
After playing $K$ episodes, the algorithm outputs a 
$\wt{\Theta}$ by picking one from $\{\wt{\Theta}^{(k)}\}_{k\in[K]}$ uniformly at random.
Now for a new task with its context, our learned policy map is given by:
\begin{align}
\label{eq:rand-pi}
\forall C,D\sim \mu, x\in \cX, h\in[H-1]:\quad\wt{\pi}_{C,D,h}(x)
= K_h\bigg(\wt{\Theta}\cdot \bigg[\begin{array}{cc}C& 0\\0 & D\end{array}\bigg]\bigg)\cdot x.
\end{align}
The formal algorithm is presented in Algorithm~\ref{alg:core-rl}.

\begin{algorithm*}[t]
	\caption{Linear Continuous Control with Contexts\label{alg:core-rl}}\small
	\label{alg:main}
	\begin{algorithmic}[1]
		\State 
		\textbf{Input} Total number of episodes $K$;
		\State \textbf{Initialize} $\Theta^{(1)}\gets 0\in\RR^{d\times 2p}$, $V^{(1)}\gets I_{2p,2p}$, 
		$W^{(1)}\gets 0\in\RR^{2p\times d}$;
		\For{episode $k=1, 2, \ldots, K$}
		\State Let $x^{(k)}_1\gets x_{\rm init}$,  $V^{(k+1)}\gets V^{(k)}$, $W^{(k+1)}\gets W^{(k)}$;
		\State Obtain context $[C^{(k)}, D^{(k)}]\sim\mu$;
		\State Solve
		\begin{align}
			\widetilde{\Theta}^{(k)} = \arg\min_{\Theta\in \cC^{(k)}} J_1^*\Big(M_{\Theta, C^{(k)}, D^{(k)}},~ x_1^{(k)}\Big)\label{eq:solve-fou}
		\end{align}
		\State where $J_1^*$ is given by Equation~\ref{eq:optimal-value}, and $\cC^{(k)}$ is defined in Equation~\ref{eq:confidence-ball};
		\For{stage $h=1, 2, \ldots, H-1$}
		\State Let the current state be $x^{(k)}_h$;
		\State  Play action $u^{(k)}_{h} \gets K_h\big(M_{\Theta^{(k)}, C^{(k)}, D^{(k)}}\big)\cdot x^{(k)}_h$, where $K_h$ is defined in Equation~\ref{eq:km};
		\State Obtain the next state $x^{(k)}_{h+1}$;
		\State Let $z^{(k)}_h\gets \bigg[ \begin{array}{c}
		\vspace{1mm}
		C^{(k)}{x^{(k)}_h}\\
		D^{(k)}{u^{(k)}_h}
		\end{array}\bigg]$;
		\State Update: $V^{(k+1)}\gets V^{(k+1)} + 
		z^{(k)}_hz^{(k)\top}_h$;
		\State Update: $W^{(k+1)}\gets W^{(k+1)}+ 
		z^{(k)}_h\Big(x^{(k)}_{h+1}\Big)^\top$;
		\EndFor
		\State \label{line:linreg} Compute 
		$
			\Theta^{(k+1)\top}
			\gets 
			\Big(V^{(k+1)}\Big)^{-1}W^{(k+1)};
		$
		\EndFor
	\State\textbf{output} $\wt{\Theta}^{(k)}$ 
	where $k$ is chosen from $[K]$ uniformly at random.
	\end{algorithmic}
\end{algorithm*}
\subsection{Algorithm Analysis}
To present the analysis of the algorithm, we first introduce our assumptions.
\begin{asmp}
	\label{asmp:regularity}
The contexts and LQR satisfy the following properties.
\begin{itemize}
\item $\forall h\in [H]$, $\|P_h(M)\|_2\le c_q$ for some parameter $c_q>0$.
\item 
 $\|\Theta_*\|_F\le c_{\Theta}$;
\item $\forall h\in[2,H],i\in[d]:\quad \|w_{h}\|_2< \infty$ and $\forall \gamma >0, ~\mathbb{E}[\gamma w_{h,i}]\le \exp(\gamma^2c_w^2/2)$;
\item $\forall x\in \cX, u\in \cU, (C,D)\in \supp(\mu): \quad \|Cx\|_2+\|Du\|_2^2\le c_x^2$, $\|x\|^2+\|u\|_2^2\le c_x^2$;
\item $\forall (C,D)\in \supp(\mu), x\in \cX, h\in [H]$: $K_h(M_{\Theta_*, C, D})\cdot x\in \cU$.
\end{itemize}
where $c_\Theta, c_w, c_x$ are some positive parameters.
\end{asmp}
The first assumption is standard to ensure controllability.
The second is a regularity condition on the optimal \metacontroller~$\Theta_*$. 
The third assumption implies the noise $w$ is sub-Gaussian and imposes boundedness of the noise $w$. 
The fourth assumption is a regularity condition on the observation.
The last assumption guarantees the optimal controller for the unconstrained LQR problem is realizable in our control set $\cU$.
Given these assumptions, We are now ready to define confidence set $\cC^{(k)}$ as follows.

\begin{align}
\cC^{(k)} = \Big\{\Theta:~\mathrm{tr}&\big[\big(\Theta-\Theta^{(k)}\big) V^{(k)}\big(\Theta-\Theta^{(k)}\big)^\top\big]\le \beta^{(k)}, \nonumber\\
&~\text{and}~\forall h\in [H], (C, D)\in\supp(\mu), ~\big\|P_{h}\big(M_{\Theta, C,D}\big)\big\|_2\le c_q
\Big\}
\label{eq:confidence-ball},
\end{align}
where $P_h$ is given by Equation~(\ref{eqn:ph}) and $\beta^{(k)}$ 
is defined as follows,
\begin{align}\label{equ:beta_k}
\beta^{(k)}=\Big(c_{\Theta}
+c_w\sqrt{2d\big(\log d + p\log(1+kHc^2_x/p)/2 + \log\delta^{-1}\big)}\Big)^2.
\end{align}

With the above assumptions,
the guarantee of Algorithm~\ref{alg:core-rl} is formally presented in the next theorem.
\begin{thm}
	\label{thm:main}
Suppose we run Algorithm~\ref{alg:core-rl} 
for 
\[
K\ge\frac{c_{H,c_q,c_x,c_{\Theta},c_w}'\cdot dp^2\cdot\log^3(dK\delta^{-1})}{\epsilon^2}
\]
 episodes, for some parameter $c_{H,c_q,c_x,c_{\Theta},c_w}'$ that depends polynomially on  $H$, $c_q, c_x, c_\Theta, c_w$, with probability at least $1-\delta$, 
for  $\wt{\pi}_{C,D}$ defined in Equation~\ref{eq:rand-pi},
we have
\begin{align}
\label{reg:pac}
\underset{[C,D]\sim\mu}{\EE} \Big[\underset{\wt{\pi}_{C,D}}{\EE}\Big(J_1^{\wt{\pi}_{C,D}}([\Theta_* C, \Theta_* D], x_1)\Big) - J_1^{*}([\Theta_* C, \Theta_* D], x_1)\Big]
\le \epsilon.
\end{align}
\end{thm}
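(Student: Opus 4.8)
The plan is to prove the PAC bound in Equation~\ref{reg:pac} by first establishing an $\wt{O}(\sqrt{T})$ regret bound for Algorithm~\ref{alg:core-rl} in the online setting, and then converting it to the PAC guarantee via the standard online-to-batch argument (since $\wt\Theta$ is drawn uniformly from $\{\wt\Theta^{(k)}\}_{k\in[K]}$, the expected suboptimality of $\wt\pi_{C,D}$ equals $\mathrm{Regret}(KH)/K$ plus lower-order terms). So the heart of the matter is: $\mathrm{Regret}(KH) \le \wt O(\sqrt{dp^2 \cdot KH})$ with probability at least $1-\delta$, from which choosing $K$ as in the theorem statement gives $\mathrm{Regret}(KH)/K \le \epsilon$.

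First I would establish the key confidence-set property: with probability at least $1-\delta$, $\Theta_* \in \cC^{(k)}$ for all $k\in[K]$ simultaneously. This is the standard self-normalized martingale argument for ridge regression (à la \cite{Abbasi-Yadkori2011}): the residuals $x^{(k)}_{h+1} - \Theta_* z^{(k)}_h = w^{(k)}_{h+1}$ form a martingale difference sequence adapted to the natural filtration, the sub-Gaussian tail assumption on $w$ controls the self-normalized sum $\|\sum z^{(k)}_h w^{(k)\top}_{h+1}\|_{(V^{(k)})^{-1}}$, and combining this with $\|\Theta_*\|_F \le c_\Theta$ and the determinant-trace bound (using $\|z^{(k)}_h\|_2^2 \le c_x^2$ from Assumption~\ref{asmp:regularity}) yields exactly the $\beta^{(k)}$ in Equation~\ref{equ:beta_k}. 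The second defining condition of $\cC^{(k)}$, namely $\|P_h(M_{\Theta_*,C,D})\|_2 \le c_q$, holds by the first bullet of Assumption~\ref{asmp:regularity}. Given $\Theta_* \in \cC^{(k)}$, optimism kicks in: since $\wt\Theta^{(k)}$ minimizes $J_1^*(M_{\Theta,C^{(k)},D^{(k)}},x_1^{(k)})$ over $\cC^{(k)}$, we get $J_1^*(M^{(k)}, x_1^{(k)}) \le J_1^*(M_{\Theta_*,C^{(k)},D^{(k)}}, x_1^{(k)})$, so the per-episode regret is bounded by $J_1^{\pi^{(k)}}(M_{\Theta_*,\cdot}, x_1) - J_1^{\pi^{(k)}}(M^{(k)}, \cdot\,, x_1)$ — i.e., the gap between the true cost and the imagined cost of the \emph{same} policy $\pi^{(k)}$, which is now a perturbation quantity rather than a suboptimality gap.

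Next I would bound this perturbation term by a telescoping/simulation-lemma argument. Rolling out $\pi^{(k)}$ under the true dynamics versus the imagined dynamics $M^{(k)}$, the difference in value decomposes over stages $h$ into terms of the form $(M^{(k)} - \Theta_*)\,y^{(k)}_h$ weighted by the value-function curvature $P_{h+1}$; crucially $(M^{(k)} - \Theta_*)y^{(k)}_h = (\Theta^{(k)} - \Theta_*)z^{(k)}_h$ since the context-regularization folds the contexts $C^{(k)},D^{(k)}$ into $z^{(k)}_h$. Both $\Theta^{(k)}$ and $\Theta_*$ lie in $\cC^{(k)}$ (the former trivially, the latter by Step~1), so $\|(\Theta^{(k)}-\Theta_*)z^{(k)}_h\| \lesssim \sqrt{\beta^{(k)}} \cdot \|z^{(k)}_h\|_{(V^{(k)})^{-1}}$ by Cauchy--Schwarz in the $V^{(k)}$-norm. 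Summing over $k$ and $h$ and applying the elliptical-potential (log-determinant) lemma $\sum_{k,h} \min(1, \|z^{(k)}_h\|^2_{(V^{(k)})^{-1}}) \le 2p\log(1 + KHc_x^2/p)$ gives, after Cauchy--Schwarz in $(k,h)$, a bound of order $\sqrt{\beta^{(K)}} \cdot \sqrt{KH} \cdot \sqrt{p\log(\cdots)} = \wt O(\sqrt{dp^2 KH})$, with $H$- and $c_q,c_x,c_\Theta,c_w$-dependent polynomial prefactors absorbed into $c'$. The boundedness assumptions ($\|x\|^2+\|u\|^2 \le c_x^2$, realizability of $K_h(M_{\Theta_*,C,D})x$ in $\cU$, bounded $P_h$) are what keep the per-stage perturbation terms and the noise cross-terms under control throughout this rollout.

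The main obstacle I anticipate is the perturbation analysis in the third step: quantifying how much the rolled-out cost of a fixed linear controller changes when the dynamics matrix moves from $M^{(k)}$ to $\Theta_*$, while keeping the state magnitudes bounded along the whole trajectory. Unlike the infinite-horizon average-cost LQR literature where one worries about stability of the closed loop, here the finite horizon helps, but one still needs: (i) a Lipschitz-type bound showing $\|K_h(M) - K_h(M')\|$ and $\|P_h(M) - P_h(M')\|$ are controlled by $\|M - M'\|$ on the region where $\|P_h\| \le c_q$ (this is where the second condition in the confidence set $\cC^{(k)}$ earns its keep — it confines both $M^{(k)}$ and $\Theta_*$ to a set of uniformly controllable instances), and (ii) careful handling of the accumulated noise terms $w^{(k)}_h$ so that expected squared state norms do not blow up over $H$ steps. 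Getting the dependence on $H$ explicit and polynomial, and making sure the $\min(1,\cdot)$ truncation in the potential lemma is justified (it requires $\|z^{(k)}_h\|_{(V^{(k)})^{-1}} $ to be bounded, which follows from $V^{(k)} \succeq I$ and $\|z^{(k)}_h\| \le c_x$), is the fiddly part; everything else is bookkeeping built on the two workhorses — self-normalized concentration and the elliptical potential lemma — exactly as in the single-task LQR and linear-bandit analyses.
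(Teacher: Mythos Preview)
Your proposal is correct and follows the paper's route almost exactly: self-normalized concentration to show $\Theta_*\in\cC^{(k)}$, optimism to replace $J_1^*(M_{\Theta_*},\cdot)$ by $J_1^*(\widetilde M^{(k)},\cdot)$, a Bellman-recursion telescoping of $J_h^{\pi^{(k)}}(M_*^{(k)},x_h^{(k)})-J_h^*(\widetilde M^{(k)},x_h^{(k)})$ into a one-step model-mismatch term controlled by $\sqrt{\beta^{(k)}}\,\|z_h^{(k)}\|_{(V^{(k)})^{-1}}$, the elliptical potential lemma, and finally online-to-batch with Azuma on the context-randomness martingales.

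One remark: the Lipschitz bounds on $K_h(M)$ and $P_h(M)$ you flag as ``the main obstacle'' are not needed. The paper's recursion evaluates both $J_h^{\pi^{(k)}}(M_*^{(k)},\cdot)$ and $J_h^*(\widetilde M^{(k)},\cdot)$ at the \emph{same realized state} $x_h^{(k)}$, so the per-step residual is the single quadratic
\[
\big(M_*^{(k)}y_h^{(k)}\big)^\top P_{h+1}(\widetilde M^{(k)})\big(M_*^{(k)}y_h^{(k)}\big)-\big(\widetilde M^{(k)}y_h^{(k)}\big)^\top P_{h+1}(\widetilde M^{(k)})\big(\widetilde M^{(k)}y_h^{(k)}\big)
\]
plus two martingale-difference terms; only the uniform bound $\|P_{h+1}(\widetilde M^{(k)})\|\le c_q$ (which is built into the definition of $\cC^{(k)}$) is used, never continuity of $P_h$ or $K_h$ in $M$.
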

Theorem~\ref{reg:pac} states that after playing polynomial number of episodes, our agent can learn a \metacontroller~$\widetilde{\Theta}$ such that given a new LQR with contexts $(C,D)$, this decoder can turns the contexts into a near-optimal policy $\wt{\pi}_{C,D}$ \emph{without any training} on the new LQR.
Note this is the desired agent we want to build as described in the introduction.
We emphasize again that this is the first provably efficient algorithm that builds a \metacontroller~for continuous control environments.

\begin{rem}
Via similar analysis, it is easy to show that if the output $\wt{\Theta}$ is picked uniformly at random from $\{{\Theta}^{(k)}\}_{k\in[K]}$, the policy achieves similar guarantees.
\end{rem}

In fact, Theorem~\ref{thm:main} is implies by the following regret bound of our algorithm.
%
\begin{prop}
	\label{prop:main}
	With probability at least $1-\delta$, 
	$$
	\mathrm{Regret}(KH)\le  c_{H}'\cdot d^{1/2}p\cdot \log^{3/2}(dKHc_x\delta^{-1})\cdot\sqrt{KH}.
	$$
	where $c_{H}'$ is a constant depending only polynomially on $H$, $c_q, c_x, c_M, c_w$.
\end{prop}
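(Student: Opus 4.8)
The plan is to follow the standard OFU (optimism-in-the-face-of-uncertainty) regret analysis adapted to the LQR-with-contexts setting, treating the per-episode cost $J_1^*(M_{\Theta,C^{(k)},D^{(k)}},x_1^{(k)})$ as the quantity being optimized. First I would establish a \emph{good event} on which $\Theta_*\in\cC^{(k)}$ for all $k\in[K]$ simultaneously. This is where the sub-Gaussian noise assumption, the boundedness $\|Cx\|_2+\|Du\|_2^2\le c_x^2$, and the choice of $\beta^{(k)}$ in Equation~\eqref{equ:beta_k} enter: using the self-normalized martingale concentration bound (e.g.\ Theorem~1 of \cite{Abbasi-Yadkori2011}) applied to the ridge-regression residuals $z^{(k)}_h(x^{(k)}_{h+1}-M_{\Theta_*}z^{(k)}_h)^\top = z^{(k)}_h w_{h+1}^\top$, one gets $\mathrm{tr}[(\Theta_*-\Theta^{(k)})V^{(k)}(\Theta_*-\Theta^{(k)})^\top]\le\beta^{(k)}$ with probability $1-\delta$, and the eigenvalue condition $\|P_h(M_{\Theta_*,C,D})\|_2\le c_q$ holds by Assumption~\ref{asmp:regularity}. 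So $\Theta_*$ is feasible for the optimization in Equation~\eqref{eq:solve-fou}, hence by optimality $J_1^*(M^{(k)}_{\text{opt}},x_1^{(k)})\le J_1^*(M_{\Theta_*},x_1^{(k)})$ where $M^{(k)}_{\text{opt}} = M_{\widetilde\Theta^{(k)},C^{(k)},D^{(k)}}$.

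Second, I would decompose the per-episode regret. The policy $\pi^{(k)}$ actually played is the \emph{optimal} controller $K_h(M^{(k)}_{\text{opt}})$ for the \emph{optimistically chosen} model $M^{(k)}_{\text{opt}}$, but it is run on the \emph{true} dynamics $M_{\Theta_*}$. Write the instantaneous regret as $J_1^{\pi^{(k)}}(M_{\Theta_*},x_1^{(k)}) - J_1^*(M_{\Theta_*},x_1^{(k)}) = \big(J_1^{\pi^{(k)}}(M_{\Theta_*},x_1^{(k)}) - J_1^*(M^{(k)}_{\text{opt}},x_1^{(k)})\big) + \big(J_1^*(M^{(k)}_{\text{opt}},x_1^{(k)}) - J_1^*(M_{\Theta_*},x_1^{(k)})\big)$. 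The second bracket is $\le 0$ by optimism. The first bracket is a simulation-lemma / performance-difference term: it measures how much worse $\pi^{(k)}$ does on $M_{\Theta_*}$ than $K_h(M^{(k)}_{\text{opt}})$ does on $M^{(k)}_{\text{opt}}$, i.e.\ the cost of the model mismatch along the realized trajectory. Using the Bellman recursion for $J^*_h(M^{(k)}_{\text{opt}},\cdot)$ and telescoping along the true trajectory (the standard value-difference identity), this term collapses to a sum over $h$ of inner products of the form $\langle \text{(gradient-type factor involving }P_{h+1}(M^{(k)}_{\text{opt}})\text{)}, (M_{\Theta_*}-M^{(k)}_{\text{opt}})\,y^{(k)}_h\rangle$ plus a quadratic-in-mismatch term; both are controlled by $\|(M_{\Theta_*}-M^{(k)}_{\text{opt}})\,y^{(k)}_h\|_2$, which since $M_{\Theta_*}-M^{(k)}_{\text{opt}} = (\Theta_*-\widetilde\Theta^{(k)})\,[\begin{smallmatrix}C^{(k)}&0\\0&D^{(k)}\end{smallmatrix}]$ equals $(\Theta_*-\widetilde\Theta^{(k)})\,z^{(k)}_h$.

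Third, I would convert the sum $\sum_k\sum_h \|(\Theta_*-\widetilde\Theta^{(k)})z^{(k)}_h\|_2$ into the elliptic-potential bound. Since both $\Theta_*$ and $\widetilde\Theta^{(k)}$ lie in $\cC^{(k)}$, the triangle inequality in the $V^{(k)}$-norm gives $\|(\Theta_*-\widetilde\Theta^{(k)})z^{(k)}_h\|_2 \le \|\Theta_*-\widetilde\Theta^{(k)}\|_{V^{(k)}}\cdot\|z^{(k)}_h\|_{(V^{(k)})^{-1}} \le 2\sqrt{\beta^{(k)}}\cdot\|z^{(k)}_h\|_{(V^{(k)})^{-1}}$. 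Then the classical elliptic potential lemma (log-determinant telescoping, e.g.\ Lemma~11 of \cite{Abbasi-Yadkori2011}) bounds $\sum_{k,h}\min(1,\|z^{(k)}_h\|^2_{(V^{(k)})^{-1}}) \le 2p\log(1+KHc_x^2/(2p))$, and Cauchy–Schwarz over the $KH$ terms yields $\sum_{k,h}\|z^{(k)}_h\|_{(V^{(k)})^{-1}} \lesssim \sqrt{KH\cdot p\log(KH)}$. Multiplying by $\sqrt{\beta^{(K)}} = \widetilde O(c_\Theta + c_w\sqrt{dp\log(KH/\delta)})$ and by the $H$-dependent Lipschitz constants from the telescoping (which absorb $c_q,c_x,c_w$ and powers of $H$ into $c_H'$) gives $\mathrm{Regret}(KH) \le c_H'\cdot d^{1/2}p\cdot\log^{3/2}(dKHc_x\delta^{-1})\cdot\sqrt{KH}$, as claimed. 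The main obstacle, and the step requiring the most care, is the performance-difference / value-simulation step in the second paragraph: one must show that running the optimistic-optimal controller on the true system incurs only first-order (plus benign second-order) error in $\|(\Theta_*-\widetilde\Theta^{(k)})z^{(k)}_h\|_2$, uniformly over $h$, using the uniform bounds $\|P_h\|_2\le c_q$ (which hold for both models by the confidence-set constraint) and the boundedness of states/actions, and one must verify that the realized trajectory stays in the regime where these bounds apply — this is exactly where Assumption~\ref{asmp:regularity} (all five bullets) is used and where the polynomial-in-$H$ blowup in $c_H'$ comes from.
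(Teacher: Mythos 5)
Your proposal follows essentially the same route as the paper's proof: a good event from the self-normalized martingale bound making $\Theta_*$ feasible in $\cC^{(k)}$, optimism to replace $J_1^*(M_{\Theta_*},\cdot)$ by $J_1^*(\widetilde M^{(k)},\cdot)$, a telescoping value-difference decomposition along the realized trajectory, and the elliptic-potential lemma combined with $\sqrt{\beta^{(K)}}$ to bound the cumulative model-mismatch term. The only point you gloss over is that the telescoping along the realized trajectory also produces two martingale-difference sequences (realized versus conditionally expected next-state quantities), which are not controlled by the mismatch $\|(\Theta_*-\widetilde\Theta^{(k)})z_h^{(k)}\|_2$ but are bounded separately by Azuma's inequality, contributing an additional $O(\mathrm{poly}(H)\,c_qc_x\sqrt{KH\log(1/\delta)})$ term absorbed into $c_H'$.
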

By the definition of regret, this proposition implies that the performance of the agent improves as it sees more environments.


\section{Experiments}
\label{sec:exp}
In this section, we validate the effectiveness of our algorithm via numerical simulations.

We perform experiments on a path-following task.
In this task, we are given a trajectory $z_1^*, z_2^*, \ldots, z_H^* \in \mathbb{R}^2$.
Our goal is to exert forces $u_1, u_2, \ldots, u_m \in \mathbb{R}^2$ on objects with different (measurable) masses to minimize the total squared distance plus the sum of the squared Euclidean norms of the forces, i.e., $\sum_{h = 1}^H \|z_h - z_h^*\|^2 + \|u_i\|_2^2$.
Each state $x_h = [z_h; v_h] \in \mathbb{R}^4$ is a vector whose first two dimensions represent the current position and the last two dimension represent the current velocity.
In each stage $h$, we may exert a force $u_h \in \mathbb{R}^2$ on the object, which produces an accelerations $\frac{u_h}{m} \in \mathbb{R}^2$.
The dynamics of the system can be described as 
\begin{align}\label{equ:law}
\begin{cases}
z_{h + 1} = z_{h} + v_h\\
v_{h + 1} = k \cdot v_h + u_h / m
\end{cases}
\end{align}
where $0 < k \le 1$ is the decay rate of velocity induced by resistance.
In our setting, the decay rate of velocity $k$ is fixed (encoded in $\Theta_*$), where the mass of the object $m$ is drawn from the uniform distribution over $[0.1, 10]$.
In our experiments, we set the noise vector $w_h$ in the dynamics of the LQR system (cf. Equation~\ref{equ:lqr}) to be a Gaussian random vector with zero mean and covariance $10^{-4} \cdot I$.
In each episode, we receive an object with mass $m$ where $m$ is draw from the uniform distribution over $[0.1, 10]$, train one trajectory using that object, and the goal is to recover the physical law described in Equation~\ref{equ:law} so that our model can deal with objects with unseen mass $m$.
Please see Appendix~\ref{sec:setting} for the concrete value of $\Theta_*$, $Q$ and $R$ and the distribution of $C$ and $D$.

In our experiments, we use $100$ different masses as {\em training masses} (fixed among all experiments), and use $100$ different masses as {\em test masses} (again fixed among all experiments). All the training masses and test masses are drawn from the uniform distribution over $[0.1, 10]$.
We implement a practical version of Algorithm~\ref{alg:main}.
In particular, instead of solving the optimization problem in Equation~\ref{eq:solve-fou} exactly, we sample $100$ different $\Theta$ from $\cC^{(k)}$ uniformly at random, and choose the $\Theta$ which minimizes the objective function. 
Moreover, instead of using the theoretical bound for $\beta^{(k)}$ in Equation~\ref{equ:beta_k}, we treat $\beta^{(k)}$ as a tunable parameter and set $\beta^{(k)} = 10^4$ in our experiments to encourage exploration at early stage of the algorithm. 
We use two different metrics to measure the accuracy of the learned model. 
First, we use $\|\Theta_k - \Theta_*\|_F$ where $\Theta_k$ is calculated in Line~\ref{line:linreg} to measure the accuracy of the learned $\Theta$.
Moreover, using the learned $\Theta$, we test on $100$ objects whose masses are the $100$ test masses to calculate the control cost $\sum_{h = 1}^H \|z_h - z_h^*\|^2 + \|u_i\|_2^2$.
We compare the control cost of the learned $\Theta$ and the optimal control cost, and use the mean value of the differences (named mean control error) to measure the accuracy.

In all experiments we fix $H = 20$.
We use three different types of trajectories: unit circle, parabola $y = x^2$ with $x \in [0, 1]$ and Lemniscate of Bernoulli with $a = 1$\footnote{\url{https://en.wikipedia.org/wiki/Lemniscate_of_Bernoulli}.}.
For all three types of trajectories we use their parametric equation $x = x(t)$ and $y = y(t)$, divide the interval $[0, 1]$ evenly into $H$ parts, and set $t$ to be the endpoints of these parts.
We use these $t$ values to define the trajectory  $z_1^*, z_2^*, \ldots, z_H^* \in \mathbb{R}^2$.
We set the decay ratio $k$ to be $k = 1$ or $k = 0.7$ in our experiments.

We plot the accuracy of the learned model in Figure~\ref{fig:acc}.
\begin{figure*}[t]
\centering
\includegraphics[scale=0.4]{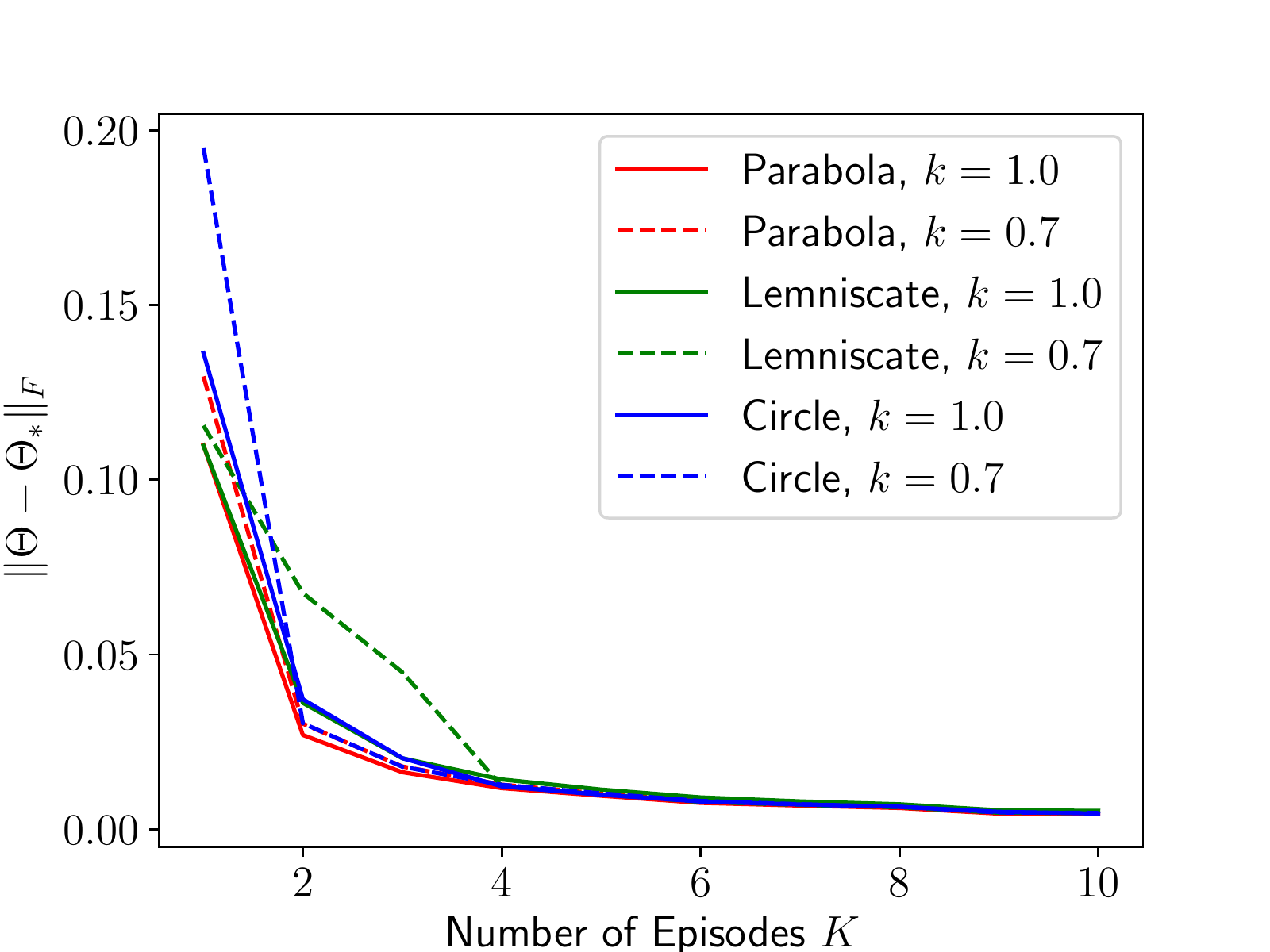}
\includegraphics[scale=0.4]{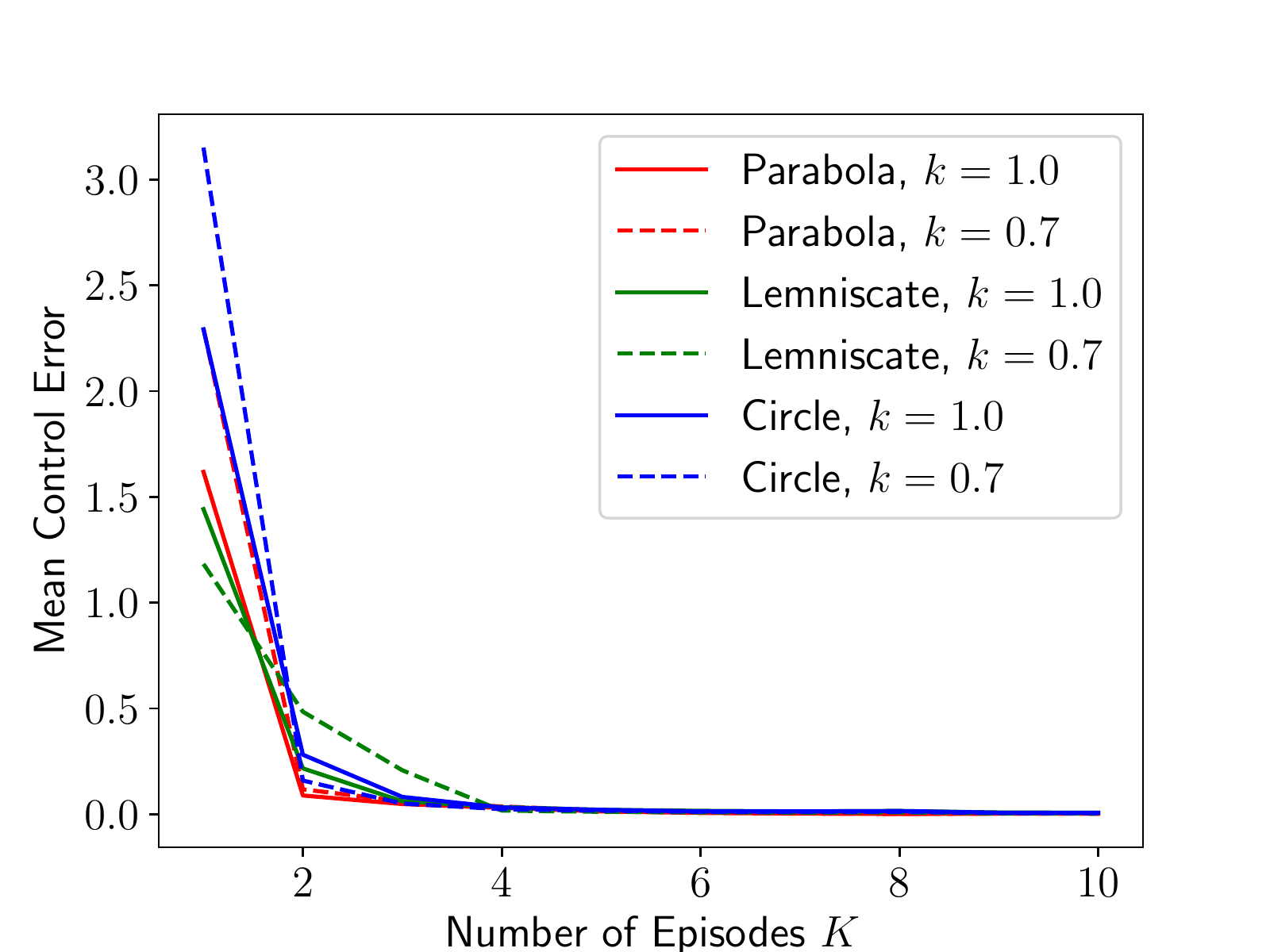}
\caption{$\|\Theta - \Theta_*\|_F$ and Mean Control Error.}
\label{fig:acc}
\end{figure*}
Here we vary the number of training episodes (the number of training masses) and observe its effect on the accuracy. 
It can be observed that our algorithm achieves an satisfactory accuracy using only $5$ episodes.
We also illustrate trajectories obtained by our resulting controllers in Figure~\ref{fig:exp}.
From Figure~\ref{fig:exp}, it is clear that as the agent plays more environments, it achieves better performance.
\begin{figure*}
	\includegraphics[width=0.48\textwidth]{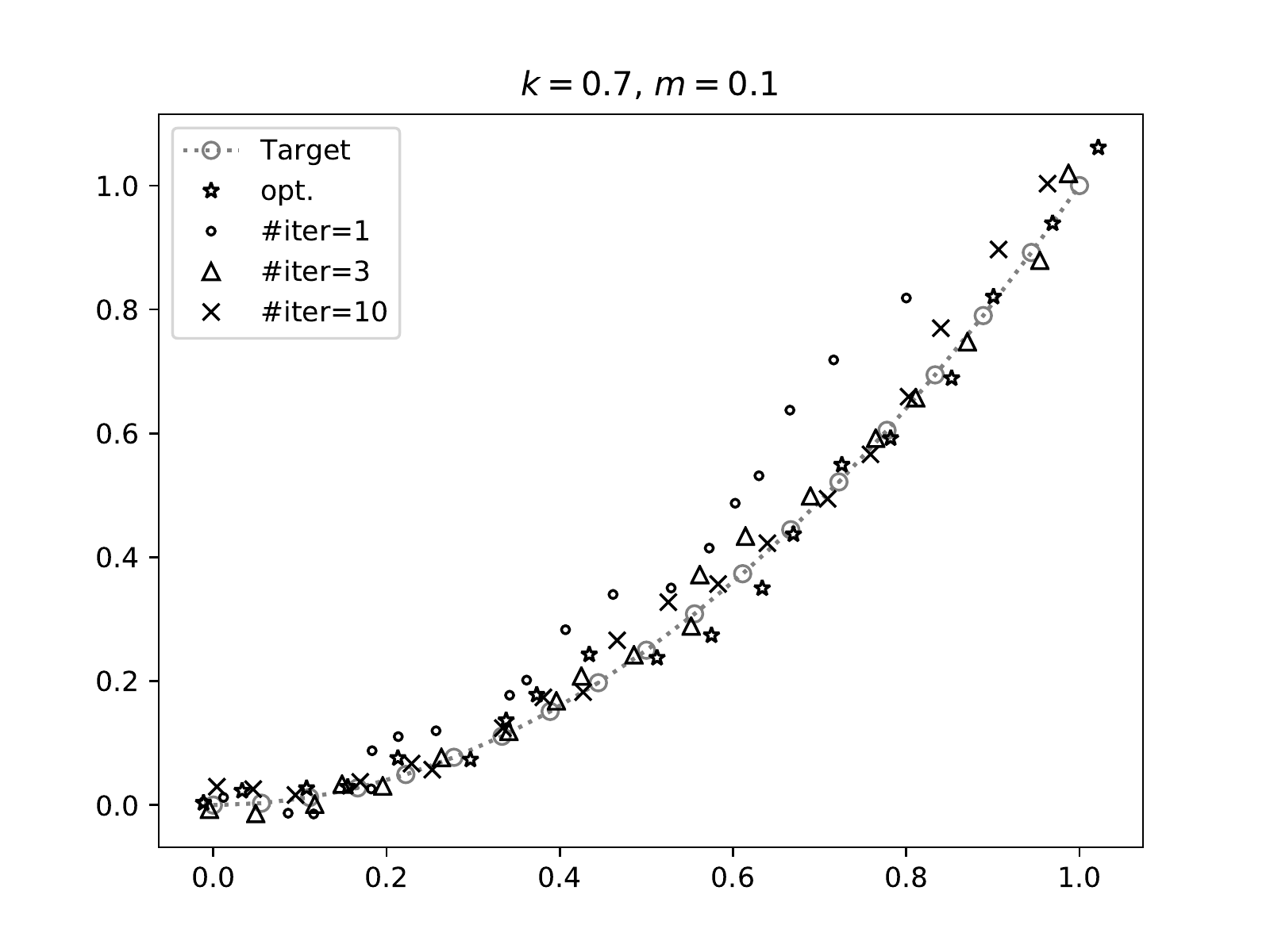}
	\includegraphics[width=0.48\textwidth]{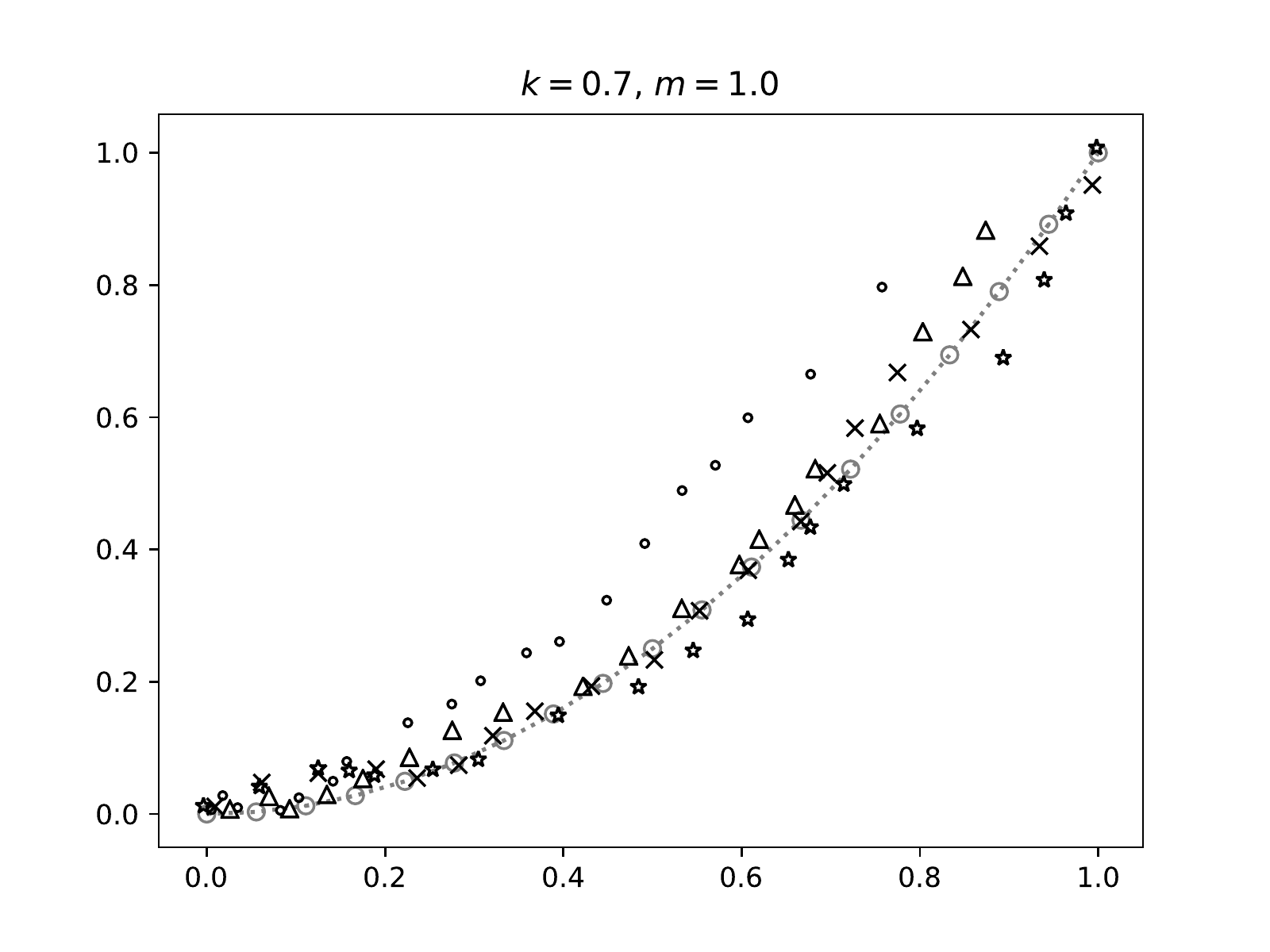}\\
	\includegraphics[width=0.48\textwidth]{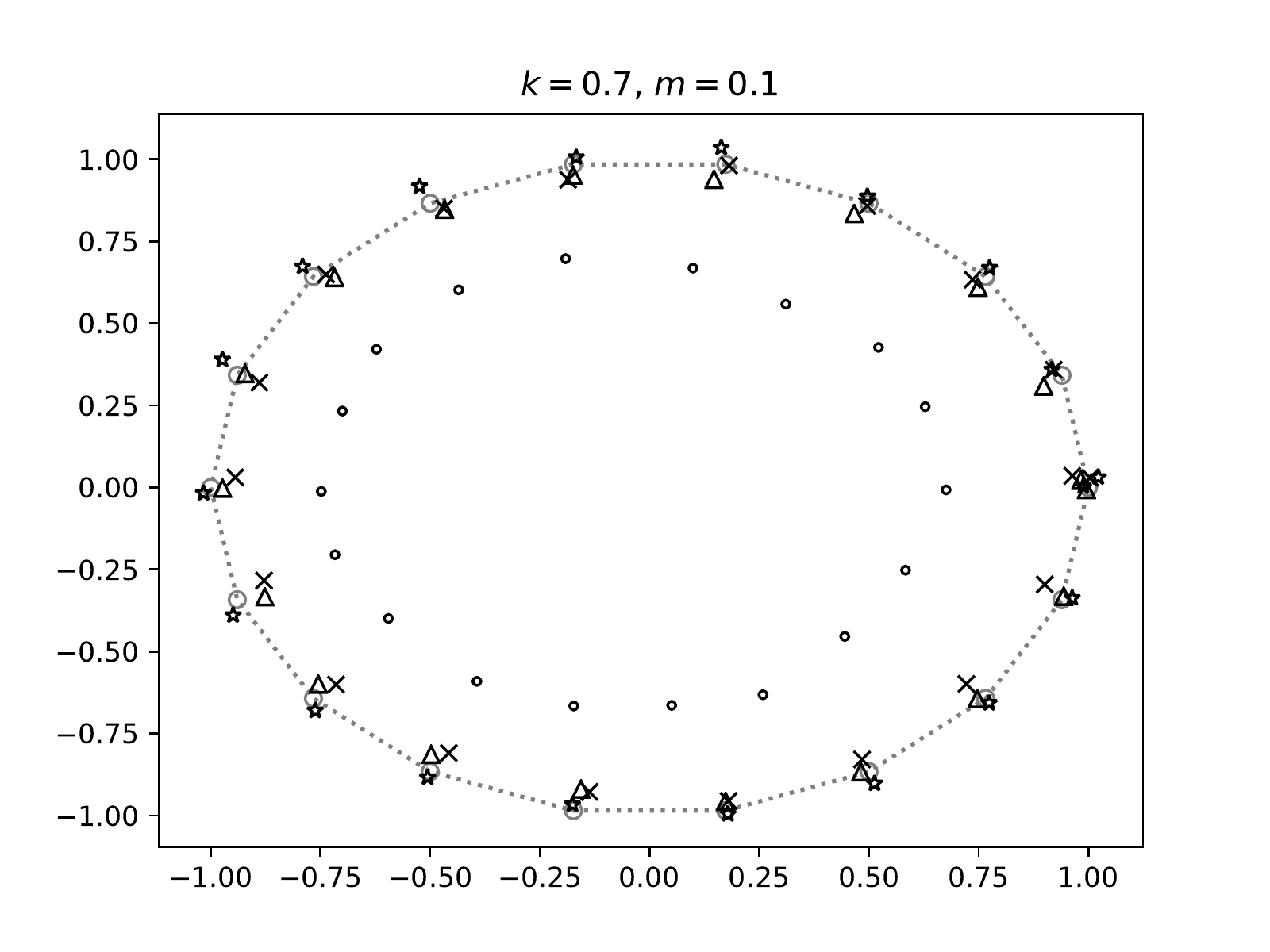}
	\includegraphics[width=0.48\textwidth]{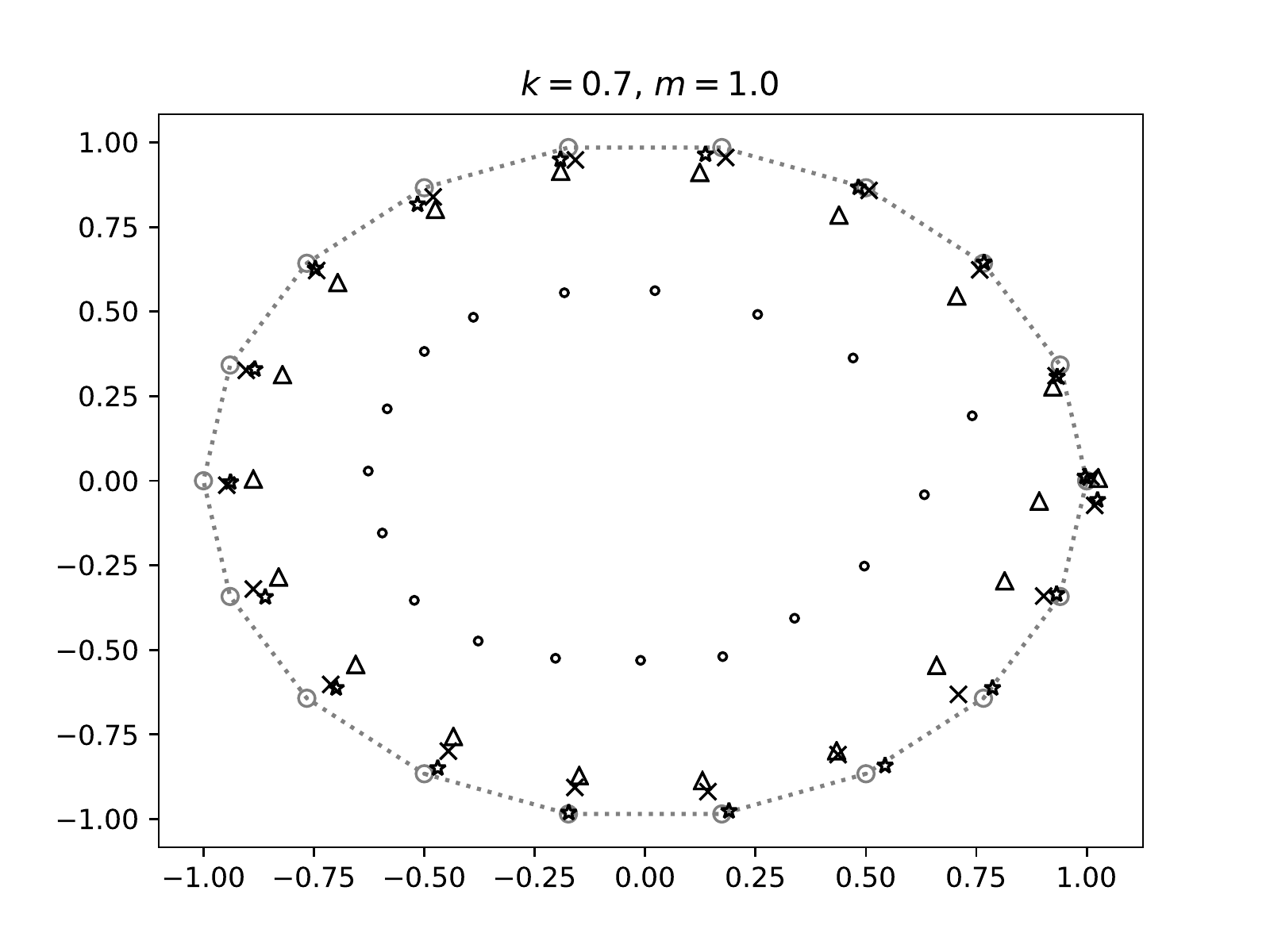}\\
	\includegraphics[width=0.48\textwidth]{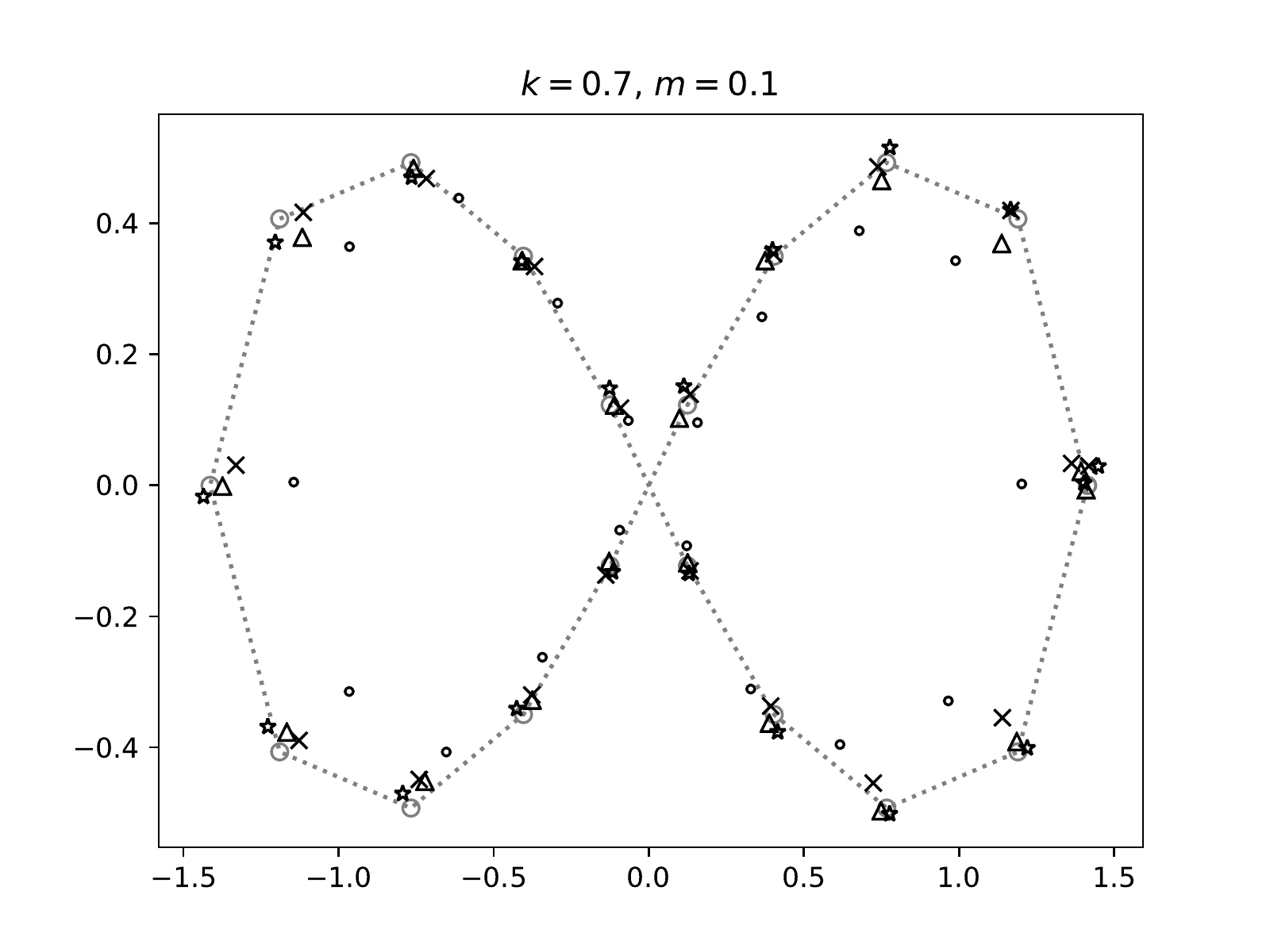}
	\includegraphics[width=0.48\textwidth]{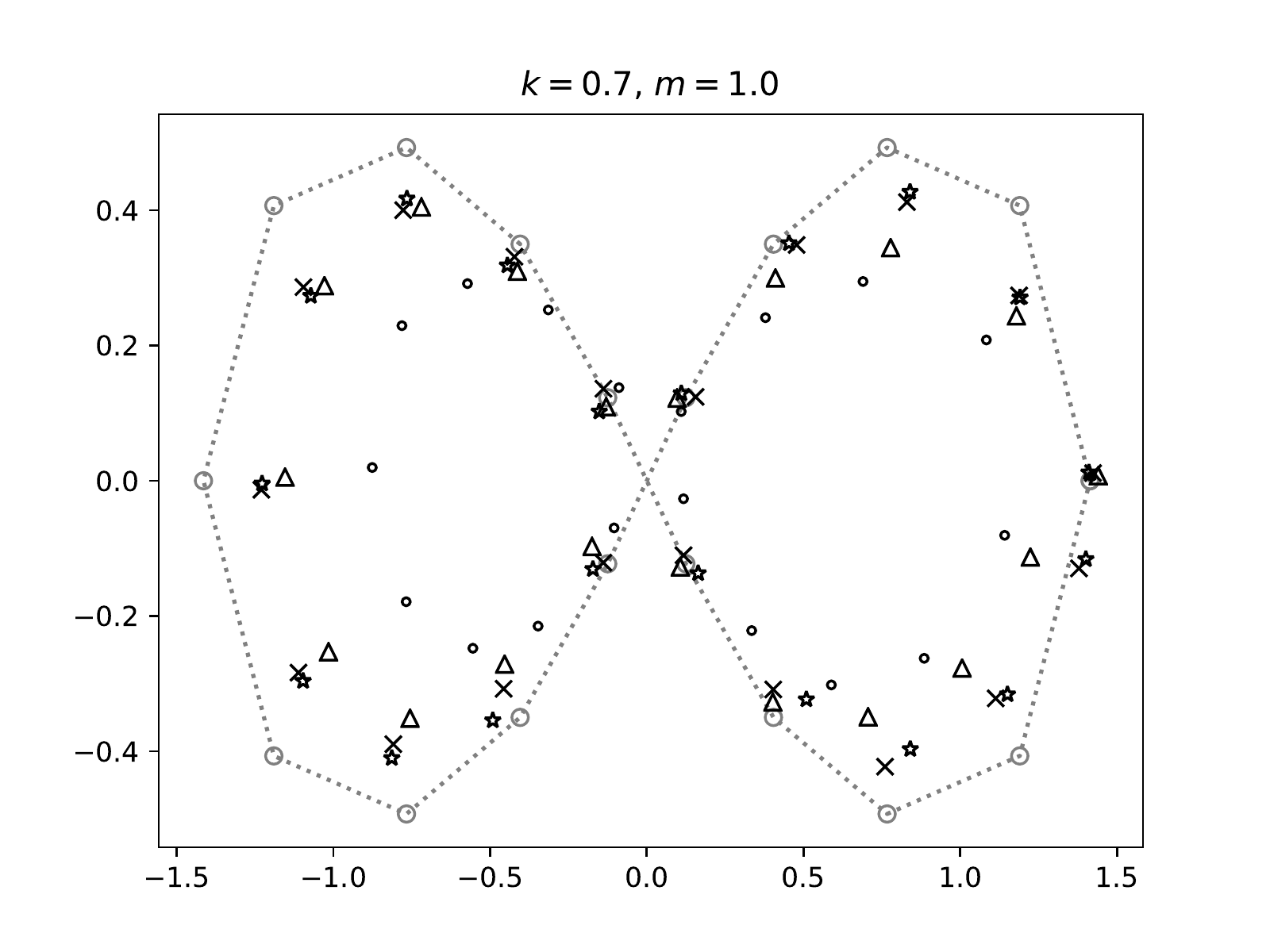}
	\caption{Example trajectories produced by the LQR controllers. We test the LQR policy to follow three types of paths: parabola, circle, and lemniscate. 
	We first train a \metacontroller, then test it on systems with $m=0.1, k=0.7$ (left column), and $m=1.0, k=0.7$ (right column). 
		Dashed line with circles: target trajectories.
		$\star$: optimal policy.
		$\circ$: \metacontroller~trained on $1$ randomly drawn environment.
		$\triangle$: \metacontroller~trained on $3$ randomly drawn environments.
		$\times$: \metacontroller~trained on $10$ randomly drawn environments.
		}
	\label{fig:exp}
\end{figure*}

\section{Conclusion}
\label{sec:con}
In this paper, we give a provably efficient algorithm for learning LQR with contexts.
Our result bridges two major fields, learning with contexts and continuous control from a theoretically-principled view.
For future work, it is interesting to study more complex settings, include non-linear control.
Another interesting direction is to design provable algorithm in our setting with safety guarantees~\citep{dann2018policy}.

\newpage
\bibliography{simonduref}
\bibliographystyle{plainnat}

\newpage
\appendix

\section{Proof of Main Results}
\label{sec:proofmain}
This sections devotes to proving the main results.
Before we prove  Proposition~\ref{prop:main}, let us use it to prove Theorem~\ref{thm:main}.
\begin{proof}[Proof of Theorem~\ref{thm:main}]
	We rewrite the Equation~\eqref{reg:pac} as follows.
	\begin{align*}
		\EE_{C,D}&\EE_{\wt{\pi}}
		\Big[J^{\wt{\pi}}_1\big(M_{\Theta_*, C, D},~ x_1\big)\Big]
		- \EE_{C,D}\Big[J^{*}_1\big(M_{\Theta_*, C, D},~  x_1\big)\Big] \\
		&=\frac{1}{K}\sum_{k=1}^{K}\EE_{C,D}\Big[ J_1^{\pi^k}\big(M_{\Theta_*, C, D},~ x_1\big)\Big] - \EE_{C,D}\Big[J_1^{*}\big(M_{\Theta_*, C, D},~ x_1\big)\Big]\\
		&=\frac{1}{K}\sum_{k=1}^{K}\Big(\EE_{C,D}\Big[ J_1^{\pi^k}\big(M_{\Theta_*, C, D},~ x_1\big)\Big] -  J_1^{\pi^k}\big(M_{\Theta_*, C^{(k)}, D^{(k)}},~ x_1\big) + J_1^{\pi^k}\big(M_{\Theta_*, C^{(k)}, D^{(k)}},~ x_1\big)\\
		&\qquad-J_1^{*}\big(M_{\Theta_*, C^{(k)}, D^{(k)}},~ x_1\big) + J_1^{*}\big(M_{\Theta_*, C^{(k)}, D^{(k)}},~ x_1\big) -\EE_{C,D}\Big[J_1^{*}\big(M_{\Theta_*, C, D},~ x_1\big)\Big]
		\Big)\\
		&=
		R_1 + R_2 + R_3
	\end{align*}
	where 
	\[
	R_1 = \frac{1}{K}\sum_{k=1}^{K}\Big(\EE_{C,D}\big[ J_1^{\pi^k}\big(M_{\Theta_*, C, D},~ x_1\big)\Big]
	-  J_1^{\pi^k}\big(M_{\Theta_*, C^{(k)}, D^{(k)}},~ x_1\big)\Big),
	\]
	\[
	R_2 = \frac{1}{K}\sum_{k=1}^{K}\Big(
	J_1^{*}\big(M_{\Theta_*, C^{(k)}, D^{(k)}},~ x_1\big)
	- \EE_{C,D}\big[ J_1^{*}\big(M_{\Theta_*, C, D},~ x_1\big)\Big]
	\Big),
	\]
	and
	\[
	R_3 = \frac{1}{K}\sum_{k=1}^{K}\Big(
	J_1^{\pi^k}\big(M_{\Theta_*, C^{(k)}, D^{(k)}},~ x_1\big)
	-  J_1^{*}\big(M_{\Theta_*, C^{(k)}, D^{(k)}},~ x_1\big)
	\Big).
	\]
	Let $\cF_{k}$ be the filtration of fixing all randomness before episode $k$. 
	We have $R_1$ and $R_2$ are Martingale difference sum.
	Note that the magnitude of each summand in $R_1$ or $R_2$ is upper bounded by (proved in Lemma~\ref{lem:jtbound} and \ref{lem:jhbound}),
	\[
	H c_q c_x
	\]
	almost surely.
	Therefore, by Azuma's inequality (Theorem~\ref{thm:azum}), we have, with probability greater than $1-\delta/2$,
	\[
	|R_1|+|R_2|\le 2Hc_qc_x\cdot\sqrt{\frac{2\log(8/\delta)}{K}}.
	\]
	Moreover, by Proposition~\ref{prop:main},
	we have with probability greater than $1-\delta/2$,
	\[
	|R_3| \le c\cdot d^{1/2}p\cdot \log^{3/2}(dKHc^2_x\delta^{-1})\cdot\sqrt{\frac{H}{K}},
	\]
	where $c$ is constant depending only polynomially on $H$, $c_q, c_x, c_M$, and $c_w$.
	Combining the above two inequalities, and setting $K$ appropriately, we complete the proof of Theorem~\ref{thm:main}.
\end{proof}
\subsection{Useful Concentration Bounds}
Before we prove the main proposition, we first recall some useful concentration bounds.
\begin{thm}[Azuma's inequality]
	\label{thm:azum}
	Assume that $\{X_s\}_{s\ge0}$ is a martingale and $|X_s-X_{s-1}|\le c_s$ 
	almost surely. Then for all $t > 0$ and all $\epsilon > 0$,
	\[
	\Pr\big[|X_t - X_0|\ge\epsilon\big]
	\le 2\exp\Bigg(\frac{-\epsilon^2}{2\sum_{s=1}^tc_s^2}\Bigg).
	\]
\end{thm}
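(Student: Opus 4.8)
The plan is to prove this classical concentration bound by the Chernoff (exponential moment) method, reducing everything to a conditional estimate on the moment generating function of each martingale difference. First I would set $D_s := X_s - X_{s-1}$ so that $X_t - X_0 = \sum_{s=1}^t D_s$, where $\E[D_s \mid \mathcal{F}_{s-1}] = 0$ by the martingale property and $|D_s| \le c_s$ almost surely by hypothesis. For any $\lambda > 0$, Markov's inequality applied to the nonnegative variable $e^{\lambda(X_t - X_0)}$ gives
\[
\Pr\big[X_t - X_0 \ge \epsilon\big] \le e^{-\lambda\epsilon}\,\E\big[e^{\lambda(X_t-X_0)}\big].
\]

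Next I would control the moment generating function by peeling off one increment at a time. Conditioning on $\mathcal{F}_{t-1}$ and using the tower property,
\[
\E\big[e^{\lambda\sum_{s=1}^t D_s}\big] = \E\Big[e^{\lambda\sum_{s=1}^{t-1}D_s}\,\E\big[e^{\lambda D_t}\mid \mathcal{F}_{t-1}\big]\Big].
\]
The crux is the conditional estimate $\E[e^{\lambda D_s}\mid \mathcal{F}_{s-1}] \le e^{\lambda^2 c_s^2/2}$, which is Hoeffding's lemma applied to the (conditionally) mean-zero variable $D_s$ supported in $[-c_s, c_s]$. Granting this, iterating over $s = t, t-1, \dots, 1$ collapses the nested product and yields
\[
\E\big[e^{\lambda(X_t-X_0)}\big] \le \exp\Big(\tfrac{\lambda^2}{2}\textstyle\sum_{s=1}^t c_s^2\Big).
\]
Combining with the Chernoff bound and optimizing the free parameter at $\lambda = \epsilon/\sum_{s=1}^t c_s^2$ gives the one-sided tail $\Pr[X_t - X_0 \ge \epsilon] \le \exp(-\epsilon^2/(2\sum_s c_s^2))$. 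Since $\{-X_s\}$ is also a martingale with the same increment bounds $c_s$, the identical argument controls $\Pr[X_t - X_0 \le -\epsilon]$, and a union bound over the two tails produces the factor $2$ in the stated inequality.

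The main obstacle is Hoeffding's lemma itself, i.e. showing that a mean-zero random variable $Y \in [a,b]$ satisfies $\E[e^{\lambda Y}] \le e^{\lambda^2(b-a)^2/8}$ (here $a=-c_s$, $b=c_s$, so $(b-a)^2/8 = c_s^2/2$). I would prove it by convexity: on $[a,b]$ the function $y \mapsto e^{\lambda y}$ lies below the chord through its endpoints, so taking expectations and using $\E[Y]=0$ bounds $\E[e^{\lambda Y}]$ by an explicit function whose logarithm $\phi(\lambda)$ satisfies $\phi(0)=\phi'(0)=0$ and $\phi''(\lambda) \le (b-a)^2/4$ uniformly (since the factor $q(1-q)$ for the exponentially tilted Bernoulli parameter $q$ never exceeds $1/4$); a second-order Taylor expansion then gives $\phi(\lambda) \le \lambda^2(b-a)^2/8$. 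The one subtlety worth flagging is that every step above must be carried out \emph{conditionally} on $\mathcal{F}_{s-1}$, but since the conditional law of $D_s$ is again mean-zero and supported in $[-c_s,c_s]$, this deterministic lemma applies verbatim inside the conditional expectation, and the tower property then chains the per-step bounds together.
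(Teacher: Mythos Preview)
Your proof is correct and is the standard Chernoff--Hoeffding argument for Azuma's inequality. Note, however, that the paper does not actually prove this statement: it is listed under ``Useful Concentration Bounds'' as a classical result that is simply recalled without proof, so there is no paper proof to compare your approach against.
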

\begin{thm}[Martingale Concentration, Theorem~16 of \cite{Abbasi-Yadkori2011}]
	\label{thm:matringale-mat}
	Let $\def\cF{\mathcal{F}}$${\cF_t; t\ge 0}$ be a filtration, $(z_t; t\ge 0)$ be an $\def\RR{\mathbb{R}}\RR^d$-valued stochastic process adapted to $(\cF_t)$. Let $(\eta_t; t\ge 1)$ be a real-valued martingale difference process adapted to $\cF_t$. Assume that $\eta_t$ is conditionally sub-Gaussian with constant $L$, i.e., 
	$$
	\forall \gamma>0\quad: \mathbb{E}[\gamma\eta_t|\cF_t]\le \exp(\gamma^2L^2/2).
	$$
	Consider the following martingale 
	$$
	S_t=\sum_{\tau=1}^{t}\eta_\tau z_{t-1}
	$$
	and the matrix-valued processes
	$$
	V_t=I + \sum_{\tau=0}^tz_{t-1}z_{t-1}^\top.
	$$
	Then for any $\delta\in (0,1)$, with probability at least $1-\delta$,
	$$
	\forall t\ge 0,\quad \|S_t\|^2_{V_t^{-1}}\le 2L^2\log\bigg(\frac{\det(V_t)^{1/2}}{\delta}\bigg)
	$$
	where $\|S_t\|_{V_t^{-1}}^2:=S_t^\top V_t^{-1} S_t$. 
\end{thm}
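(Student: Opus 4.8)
The plan is to prove this self-normalized tail bound by the \emph{method of mixtures} (pseudo-maximization), which is the standard route for vector-valued martingale concentration inequalities of this type. Write $\bar V_t = \sum_{\tau=1}^t z_{\tau-1}z_{\tau-1}^\top$ so that $V_t = I + \bar V_t$, and recall $S_t = \sum_{\tau=1}^t \eta_\tau z_{\tau-1}$. The first step is to construct, for each \emph{fixed} direction $\lambda \in \RR^d$, the exponential process
\begin{align}
D_t^\lambda = \exp\left(\lambda^\top S_t - \frac{L^2}{2}\,\lambda^\top \bar V_t\,\lambda\right),
\end{align}
with $D_0^\lambda = 1$. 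I would show $(D_t^\lambda)_{t\ge 0}$ is a nonnegative supermartingale for $(\cF_t)$. Since $z_{t-1}$ is $\cF_{t-1}$-measurable, conditioning on $\cF_{t-1}$ and applying the conditional sub-Gaussian hypothesis with $\gamma = \lambda^\top z_{t-1}$ gives $\EE[\exp(\eta_t\,\lambda^\top z_{t-1})\mid \cF_{t-1}] \le \exp(L^2(\lambda^\top z_{t-1})^2/2)$, whence $\EE[D_t^\lambda \mid \cF_{t-1}] \le D_{t-1}^\lambda$ and in particular $\EE[D_t^\lambda] \le 1$.

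The second step mixes these supermartingales against a Gaussian prior, producing a single process that no longer depends on an a priori choice of $\lambda$. Taking $\lambda \sim \nml(0, (L^2)^{-1}I)$ with density $h$, I define $\bar M_t = \int_{\RR^d} D_t^\lambda\, h(\lambda)\, d\lambda$. Because $D_t^\lambda \ge 0$ and $\int h = 1$, Tonelli's theorem gives $\EE[\bar M_t] = \int \EE[D_t^\lambda]\, h(\lambda)\, d\lambda \le 1$, and the pointwise-in-$\lambda$ supermartingale property passes through the nonnegative mixing weights, so $(\bar M_t)$ is itself a nonnegative supermartingale with $\EE[\bar M_0]=1$. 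The payoff is that the mixing integral is an explicit Gaussian integral: completing the square in $\lambda$ and using $L^2\bar V_t + L^2 I = L^2 V_t$ to evaluate the normalizing determinant yields the closed form
\begin{align}
\bar M_t = \det(V_t)^{-1/2}\exp\left(\frac{1}{2L^2}\,\norm{S_t}_{V_t^{-1}}^2\right),
\end{align}
where $\norm{S_t}_{V_t^{-1}}^2 = S_t^\top V_t^{-1} S_t$.

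The final step converts the supermartingale bound into the stated uniform-in-$t$ tail bound. Applying Ville's maximal inequality for nonnegative supermartingales gives $\Pr[\sup_{t\ge 0} \bar M_t \ge 1/\delta] \le \delta$. On the complementary event, for every $t$ we have $\bar M_t < 1/\delta$; taking logarithms in the closed form above and rearranging gives precisely $\norm{S_t}_{V_t^{-1}}^2 \le 2L^2\log(\det(V_t)^{1/2}/\delta)$ simultaneously for all $t$, which is the claim.

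I expect the main obstacle to be the \emph{uniform-in-$t$} quantifier: a fixed-$t$ Markov/Chernoff argument on $\EE[\bar M_t]\le 1$ controls only a single time index, whereas the theorem asserts the bound for all $t\ge 0$ at once. Making this rigorous requires the maximal inequality for the mixture supermartingale, and the delicate point is that $\bar M_t$ need not converge nicely, so the naive optional-stopping step must be applied to a stopped version of the process (equivalently, Ville's inequality is invoked together with the almost-sure convergence of nonnegative supermartingales). A secondary care-point is justifying the Fubini/Tonelli interchange, the finiteness of the Gaussian integral, and the survival of the supermartingale property under mixing — all routine once nonnegativity of $D_t^\lambda$ and the normalization of $h$ are in hand.
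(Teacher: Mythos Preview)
The paper does not prove this theorem at all: it is quoted verbatim as Theorem~16 of \cite{Abbasi-Yadkori2011} and then invoked as a black box in the confidence-set lemma. So there is no ``paper's own proof'' to compare against here.

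That said, your proposal is exactly the argument used in the original reference. The exponential process $D_t^\lambda$ is a supermartingale by the conditional sub-Gaussian assumption, the Gaussian mixture integral evaluates to $\bar M_t = \det(V_t)^{-1/2}\exp\bigl(\tfrac{1}{2L^2}\norm{S_t}_{V_t^{-1}}^2\bigr)$ as you wrote, and Ville's maximal inequality for nonnegative supermartingales delivers the time-uniform bound. Your identification of the uniform-in-$t$ step as the only non-routine point is accurate, and the remedy you propose (Ville/optional stopping on the mixed process) is the correct one. Two cosmetic remarks: the statement as printed has $\forall\gamma>0$ and $\EE[\gamma\eta_t\mid\cF_t]$, which are evident typos for $\forall\gamma\in\RR$ and $\EE[\exp(\gamma\eta_t)\mid\cF_{t-1}]$; your proof silently (and correctly) uses the intended versions, which you might flag explicitly.
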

\subsection{Proof of Proposition~\ref{prop:main}}
In this section, we prove the main proposition. We first bound $\det(V^{(k)})$ for any $k$.
\begin{lem}
	\label{lem:detbound}
	For all $k\in [K]$,
	\[
	\det(V^{(k)})
	\le
	\big(1+kHc^2_x/p\big)^{p}.
	\]
\end{lem}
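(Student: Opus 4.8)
\textbf{Proof plan for Lemma~\ref{lem:detbound}.}

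The plan is to apply the standard determinant–trace inequality to the positive definite matrix $V^{(k)}$. First I would recall that $V^{(k)} = I + \sum_{k'=1}^{k-1}\sum_{h=1}^{H-1} z^{(k')}_h z^{(k')\top}_h$ is a $2p\times 2p$ positive definite matrix (here I use the fact from the algorithm that $z^{(k)}_h\in\RR^{2p}$, so $V^{(k)}\in\RR^{2p\times 2p}$; the bound will then read $(1+kHc_x^2/(2p))^{2p}$ up to the precise normalization, but I will follow the paper's convention). By the AM–GM inequality applied to the eigenvalues $\lambda_1,\ldots,\lambda_{2p}$ of $V^{(k)}$, we have
\[
\det(V^{(k)}) = \prod_{i=1}^{2p}\lambda_i \le \left(\frac{1}{2p}\sum_{i=1}^{2p}\lambda_i\right)^{2p} = \left(\frac{\tr(V^{(k)})}{2p}\right)^{2p}.
\]

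Next I would bound the trace. By linearity, $\tr(V^{(k)}) = \tr(I) + \sum_{k'=1}^{k-1}\sum_{h=1}^{H-1}\tr(z^{(k')}_h z^{(k')\top}_h) = 2p + \sum_{k'=1}^{k-1}\sum_{h=1}^{H-1}\|z^{(k')}_h\|_2^2$. Now $z^{(k')}_h = [C^{(k')}x^{(k')}_h; D^{(k')}u^{(k')}_h]$, so $\|z^{(k')}_h\|_2^2 = \|C^{(k')}x^{(k')}_h\|_2^2 + \|D^{(k')}u^{(k')}_h\|_2^2$. The fourth bullet of Assumption~\ref{asmp:regularity} gives $\|Cx\|_2 + \|Du\|_2^2 \le c_x^2$ — I would use this (together with $\|Cx\|_2^2\le(\|Cx\|_2+\|Du\|_2^2)^2\le c_x^4$, or more carefully the variant bound it implies) to conclude $\|z^{(k')}_h\|_2^2 \le c_x^2$ (this is the intended reading; the assumption as stated bounds the relevant quantities by $c_x^2$ along each trajectory since all states and actions stay in $\cX,\cU$). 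There are at most $k$ outer indices and $H-1 < H$ inner indices, so the double sum is at most $kHc_x^2$, giving $\tr(V^{(k)}) \le 2p + kHc_x^2$.

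Combining the two displays,
\[
\det(V^{(k)}) \le \left(\frac{2p + kHc_x^2}{2p}\right)^{2p} = \left(1 + \frac{kHc_x^2}{2p}\right)^{2p},
\]
which is the claimed bound (modulo whether the paper writes $p$ or $2p$ in the exponent — I would match whichever normalization of the dimension is used consistently, noting the displayed statement uses $p$, presumably absorbing the factor of $2$ into the definition of $p$ or treating $z$ as $p$-dimensional). The only mild subtlety — and the step I would be most careful about — is extracting the clean bound $\|z^{(k')}_h\|_2^2\le c_x^2$ from the fourth item of Assumption~\ref{asmp:regularity}, since that item mixes a norm and a squared norm; one must check that it (or its consequences applied to the on-trajectory states and actions) indeed yields the stated per-step bound on $\|z^{(k')}_h\|_2^2$. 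Everything else is the routine determinant–trace argument.
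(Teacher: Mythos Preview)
Your proposal is correct and follows essentially the same route as the paper: apply the AM--GM (determinant--trace) inequality to the positive definite matrix $V^{(k)}$, bound $\tr(V^{(k)})$ by $p+\sum_{k',h}\|z^{(k')}_h\|_2^2$, and invoke Assumption~\ref{asmp:regularity} to get $\|z^{(k')}_h\|_2^2\le c_x^2$. Your observations about the $p$ versus $2p$ discrepancy and about the precise reading of the fourth bullet in Assumption~\ref{asmp:regularity} are apt (the paper simply writes $p$ throughout and asserts $\|z^{(k')}_h\|_2^2\le c_x^2$ directly), but they do not affect the argument's structure.
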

\begin{proof}
	Since $V^{(k)}$ is PD, we have,
	\[
	\det(V^{(k)})\le \Big(\mathrm{tr}(V^{(k)})/p\Big)^{p}\le
	\bigg(1+\sum_{k'=1}^{k}\sum_{h=1}^{H-1}\big\|z^{(k')}_h\big\|_2^2/p\bigg)^{p}.
	\]
	By Assumption~\ref{asmp:regularity}, we have $\|z^{(k')}_h\|_2^2\le c^2_x$.
	This completes the proof.
\end{proof}
Let us then define an event $E_k$ as follows.
\begin{defn}[Good Event]
	We define event $E_k$ as $\{\forall k'\le k:~\Theta_*\in \cC^{(k')}\}$.  
\end{defn}
We then show that the event $E_k$ happens with high probability.
\begin{lem}
	For all $k\in [K]$, we have $\Pr[E_k]\ge 1-\delta$.
\end{lem}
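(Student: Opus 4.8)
The plan is to show that $\Theta_* \in \cC^{(k)}$ for all $k' \le k$ simultaneously with probability at least $1-\delta$, by analyzing the two defining conditions of $\cC^{(k)}$ separately. The second condition, namely $\|P_h(M_{\Theta_*, C, D})\|_2 \le c_q$ for all $h$ and all $(C,D) \in \supp(\mu)$, holds deterministically by the first bullet of Assumption~\ref{asmp:regularity}, since $M_{\Theta_*, C, D} = [A, B]$ is precisely the true system matrix for that context. So the entire burden falls on the first condition: bounding the weighted deviation $\mathrm{tr}[(\Theta_* - \Theta^{(k)}) V^{(k)} (\Theta_* - \Theta^{(k)})^\top]$ by $\beta^{(k)}$.

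First I would write out what the ridge regression estimate $\Theta^{(k)}$ is. Recall $\Theta^{(k)\top} = (V^{(k)})^{-1} W^{(k)}$ where $W^{(k)} = \sum_{k' < k}\sum_{h=1}^{H-1} z^{(k')}_h x^{(k')\top}_{h+1}$. Using the dynamics $x^{(k')}_{h+1} = M_{\Theta_*} y^{(k')}_h + w^{(k')}_{h+1} = \Theta_* z^{(k')}_h + w^{(k')}_{h+1}$ (here using that $\Theta_*$ acting on the block-diagonal context matrix applied to $y_h$ equals $\Theta_*$ applied to $z_h$), I would expand $W^{(k)} = \sum z^{(k')}_h z^{(k')\top}_h \Theta_*^\top + \sum z^{(k')}_h w^{(k')\top}_{h+1}$. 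Writing $V^{(k)} = I + \sum z^{(k')}_h z^{(k')\top}_h$, this gives the standard decomposition $\Theta^{(k)\top} - \Theta_*^\top = (V^{(k)})^{-1}\big(S^{(k)} - \Theta_*^\top\big)$, where $S^{(k)} = \sum_{k'<k}\sum_h z^{(k')}_h w^{(k')\top}_{h+1}$ is the noise term. Consequently
\[
\mathrm{tr}\big[(\Theta_* - \Theta^{(k)}) V^{(k)} (\Theta_* - \Theta^{(k)})^\top\big] = \big\| S^{(k)} - \Theta_*^\top \big\|_{(V^{(k)})^{-1}}^2,
\]
where the matrix norm is taken column-wise (or via the trace, equivalently). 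By the triangle inequality for the $(V^{(k)})^{-1}$-norm, this is at most $\big(\|S^{(k)}\|_{(V^{(k)})^{-1}} + \|\Theta_*^\top\|_{(V^{(k)})^{-1}}\big)^2$. The second term is bounded by $\|\Theta_*\|_F \le c_\Theta$ since $(V^{(k)})^{-1} \preceq I$ (as $V^{(k)} \succeq I$).

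For the noise term, I would apply the matrix martingale concentration bound (Theorem~\ref{thm:matringale-mat}), applied column by column to $S^{(k)}$: for each coordinate $i \in [d]$ of the noise vector, the process $\sum z^{(k')}_h w^{(k')}_{h+1,i}$ is a martingale whose increments are conditionally sub-Gaussian with constant $c_w$ by the third bullet of Assumption~\ref{asmp:regularity}. A union bound over the $d$ coordinates, each with failure probability $\delta/d$, together with the determinant bound $\det(V^{(k)}) \le (1 + kHc_x^2/p)^p$ from Lemma~\ref{lem:detbound}, yields $\|S^{(k)}\|_{(V^{(k)})^{-1}}^2 \le 2 c_w^2 d\big(\log d + \tfrac{p}{2}\log(1 + kHc_x^2/p) + \log \delta^{-1}\big)$ simultaneously for all $k$ (the martingale bound is already uniform in the time index, so no further union over $k$ is needed). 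Combining, $\mathrm{tr}[(\Theta_* - \Theta^{(k)}) V^{(k)} (\Theta_* - \Theta^{(k)})^\top] \le \big(c_\Theta + c_w\sqrt{2d(\log d + \tfrac{p}{2}\log(1 + kHc_x^2/p) + \log\delta^{-1})}\big)^2 = \beta^{(k)}$, exactly as defined in Equation~\eqref{equ:beta_k}. The main subtlety to get right is the bookkeeping of the matrix-versus-vector norms and making sure the single application of Theorem~\ref{thm:matringale-mat} (which is already uniform over $t$) suffices to cover all episodes $k' \le k$ at once, so that the union bound cost is only over the $d$ output coordinates and not over episodes.
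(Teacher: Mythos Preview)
Your proposal is correct and follows essentially the same approach as the paper: expand the ridge regression error as $(V^{(k)})^{-1}(S^{(k)}-\Theta_*^\top)$, bound the resulting $(V^{(k)})^{-1}$-norm via the triangle/Cauchy--Schwarz inequality into a $\|\Theta_*\|_F$ piece and a noise piece, then control the noise by applying Theorem~\ref{thm:matringale-mat} to each of the $d$ output coordinates with a union bound and invoking the determinant bound of Lemma~\ref{lem:detbound}. Your explicit remarks that the second condition of $\cC^{(k)}$ holds deterministically by Assumption~\ref{asmp:regularity} and that the self-normalized bound is already uniform in $t$ (so no union over episodes is required) are welcome clarifications that the paper's proof leaves implicit.
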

\begin{proof}
	Now we consider $\Theta_* - \Theta^{(k)}$. We immediately have
	\begin{align*}
		\Theta_*^\top - \Theta^{(k)\top} &= \Theta_*^\top - (V^{(k)})^{-1}\bigg(\sum_{k'=1}^{k}\sum_{h=1}^{H-1}z^{(k')}_h (\Theta_*z^{(k')}_h + w_{h+1}^{(k')})^\top\bigg)\\
		&=\bigg(I - (V^{(k)})^{-1}\sum_{k'=1}^k\sum_{h=1}^{H-1}z^{(k')}_hz^{(k')\top}_h\bigg) \Theta_*^\top + (V^{(k)})^{-1}\sum_{k'=1}^k\sum_{h=1}^{H-1}z^{(k')}_hw_{h+1}^{(k')\top}.
	\end{align*}
	Next, we have
	\begin{align*}
		(\Theta_*&-\Theta^{(k)}) V^{(k)}(\Theta_* - \Theta^{(k)})^\top\\
		=&\Theta_*\bigg(I - (V^{(k)})^{-1}\sum_{k'=1}^k\sum_{h=1}^{H-1}z^{(k')}_hz^{(k')\top}_h\bigg)^\top V^{(k)} \bigg(I - (V^{(k)})^{-1}\sum_{k'=1}^k\sum_{h=1}^{H-1}z^{(k')}_hz^{(k')\top}_h\bigg)\Theta_*^{\top}\\
		&+\Theta_*\bigg(I - (V^{(k)})^{-1}\sum_{k'=1}^k\sum_{h=1}^{H-1}z^{(k')}_hz^{(k')\top}_h\bigg)^\top\sum_{k'=1}^k\sum_{h=1}^{H-1}z^{(k')}_hw_{h+1}^{(k')\top}\\
		&+\sum_{k'=1}^k\sum_{h=1}^{H-1}w_{h+1}^{(k')}z^{(k')\top}_h\bigg(I - (V^{(k)})^{-1}\sum_{k'=1}^k\sum_{h=1}^{H-1}z^{(k')}_hz^{(k')\top}_h\bigg)\Theta_*^{\top}\\
		&+\sum_{k'=1}^k\sum_{h=1}^{H-1}w_{h+1}^{(k')}z^{(k')\top}_h (V^{(k)})^{-1}
		\sum_{k'=1}^k\sum_{h=1}^{H-1}z^{(k')}_hw_{h+1}^{(k')\top}
	\end{align*}
	Note that $\sum_{k'=1}^k\sum_{h=1}^{H-1}z^{(k')}_hz^{(k')\top}_h = V^{(k)}-I$ and thus $(V^{(k)})^{-1}\sum_{k'=1}^k\sum_{h=1}^{H-1}z^{(k')}_hz^{(k')\top}_h=I-(V^{(k)})^{-1}$. Hence we have
	\begin{align*}
		\mathrm{tr}\big[(&\Theta_*-\Theta^{(k)}) V^{(k)}(\Theta_* - \Theta^{(k)})^\top\big]\\
		&=\|\Theta^*\|_{(V^{(k)})^{-1}}^2 + 2\mathrm{tr}\Big(\Theta_*(V^{(k)})^{-1}\sum_{k'=1}^k\sum_{h=1}^{H-1}z^{(k')}_hw_{h+1}^{(k')\top}\Big) 
		+ \Big\|\sum_{k'=1}^k\sum_{h=1}^{H-1}z^{(k')}_hw_{h+1}^{(k')\top}\Big\|_{(V^{(k)})^{-1}}^2\\
		&\le \|\Theta_*\|_{(V^{(k)})^{-1}}^2 + 2\Big\|\Theta_{*}\Big\|_{(V^{(k)})^{-1}}\Big\|\sum_{k'=1}^k\sum_{h=1}^{H-1}z^{(k')}_hw_{h+1}^{(k')\top}\Big\|_{(V^{(k)})^{-1}} 
		+ \Big\|\sum_{k'=1}^k\sum_{h=1}^{H-1}z^{(k')}_hw_{h+1}^{(k')\top}\Big\|_{(V^{(k)})^{-1}}^2\\
	\end{align*}
	where $\|X\|_{V}^2:=\mathrm{tr}\big(X^\top VX\big)$ and the last inequality uses Cauchy-Schwartz inequality. Notice that
	$$
	\|\Theta_*\|_{(V^{(k)})^{-1}}\le \|\Theta_*\|_F.
	$$
	Moreover, we have
	\begin{align*}
		\Big\|\sum_{k'=1}^k\sum_{h=1}^{H-1}z^{(k')}_hw_{h+1}^{(k')\top}\Big\|_{(V^{(k)})^{-1}}^2
		&= \bigg\|(V^{(k)})^{-1/2}\sum_{k'=1}^k\sum_{h=1}^{H-1}z^{(k')}_hw_{h+1}^{(k')\top}\bigg\|_F^2 \\
		&= \sum_{j\in[d]}
		\bigg\|(V^{(k)})^{-1/2}\sum_{k'=1}^k\sum_{h=1}^{H-1}w_{h+1,j}^{(k')}z^{(k')}_h\bigg\|_2^2\\
		&=\sum_{j\in[d]} \Big\|\sum_{k'=1}^k\sum_{h=1}^{H-1}w_{h+1,j}^{(k')}z^{(k')}_h\Big\|_{(V^{(k)})^{-1}}^2
	\end{align*}
	By Theorem \ref{thm:matringale-mat}, we have, for every $j\in[d]$, with probability at least $1-\delta/d$, we have,
	$$
	\Big\|\sum_{k'=1}^k\sum_{h=1}^{H-1}w_{h+1,j}^{(k')}z^{(k')}_h\Big\|_{(V^{(k)})^{-1}}^2 \le 2c_w^2\log(d\det(V^{(k)})^{1/2}/\delta).
	$$
	By an union bound, we have, with probability at least $1-\delta$,
	$$
	\Big\|\sum_{k'=1}^k\sum_{h=1}^{H-1}w_{h+1}^{(k')}z^{(k')}_h\Big\|_{(V^{(k)})^{-1}}^2 \le 2dc_w^2\log(d\det(V^{(k)})^{1/2}/\delta).
	$$
	Plugging to $\mathrm{tr}\big[(\Theta_*-\Theta^{(k)}) V^{(k)}(\Theta_* - \Theta^{(k)})^\top\big]$, we have, with probability at least $1-\delta$, 
	\begin{align*}
		\mathrm{tr}\big[(\Theta_*-\Theta^{(k)}) V^{(k)}(\Theta_* - \Theta^{(k)})^\top\big]
		&\le \Big(c_{\Theta}+c_w\sqrt{2d\log(d\det(V^{(k)})^{1/2}/\delta)}\Big)^2\\
		&\le  \Big(c_{\Theta}
		+c_w\sqrt{2d\big(\log d + p\log(1+kHc^2_x/p)/2 + \log\delta^{-1}\big)}\Big)^2.
	\end{align*}
	This completes the proof.
\end{proof}

We define $\one_{E_K}$ as the indicator for $E_K$ happens. 
We denote 
\[
M_*^{(k)} = M_{\Theta_*, C^{(k)}, D^{(k)}},\quad
M^{(k)} = 
M_{\Theta^{(k)}, C^{(k)}, D^{(k)}}, \quad\text{and}
\quad
y^{(k)}_h=[x_h^{(k)\top}, u_h^{(k)\top}]^\top.
\]
On $E_k$, we have
$$
\forall k\in [K]:\quad J_1^*(\widetilde{M}^{(k)}, x_1^{k})\le J_1^*({M}_*^{(k)}, x_1^{k}).
$$
We denote $\Delta^{(k)}:=J_h^{\pi^{k}}(M_*^{(k)}, x_1) - J_h^{*}(M_*^{(k)}, x_1)$. We can rewrite  $\eqref{eq:reg}$ as
$$
\mathrm{Regret}(KH) = \sum_{k=1}^K\one_{E_k}\Delta^{(k)}+ \sum_{k=1}^K(1-\one_{E_k})\Delta^{(k)},
$$
where the second term is non-zero with probability less than $\delta$. For the first term, we have
$$
\one_{E_k}\Delta^{(k)}\le \one_{E_k}\big[J_1^{\pi^k}(M_*^{(k)}, x_1) - J_1^*(\widetilde{M}^{(k)}, x_1))\big]=:\one_{E_k}\cdot\widetilde{\Delta}^{(k)}_1,
$$
where
$$
\widetilde{\Delta}^{(k)}_h=J_h^{\pi^k}(M_*^{(k)}, x_h) -J^*_h(\widetilde{M}^{(k)}, x_h).
$$
Let us consider $\widetilde{\Delta}_h^{(k)}$. We denote filtration $\cF_{k,h}$ as fixing the trajectory  up to time $(k,h)$ and all $\{C^{(k')}, D^{(k')}\}_{k'\le k}$.

We have
\begin{align*}
	\widetilde{\Delta}_h^{(k)}=& x^{(k)\top}_hQ_hx_h^{(k)} + u^{(k)\top}_hR_hu_h^{(k)}  +  \mathbb{E}_{w_{h+1}^{(k)}}[J_{h+1}^{\pi^{k}}(M_*^{(k)}, x_{h+1}^{(k)})~|~\cF_{k,h}] \\
	&- x^{(k)\top}_{h}Q_hx_{h}^{(k)} - u^{(k)\top}_{h}R_hu_{h}^{(k)}\\
	&-
	\mathbb{E}_{w_{h+1}^{(k)}}\Big[(\widetilde{M}^{(k)}z^{(k)}_{h}+w^{(k)}_{h+1})^\top P_{h+1}(\widetilde{M}^{(k)})(\widetilde{M}^{(k)}z^{(k)}_{h}+w^{(k)}_{h+1})~|~\cF_{k,h}\Big]\\
	&-C_{h+1}(\widetilde{M}^{(k)})\\
	=&\mathbb{E}_{w_{h+1}^{(k)}}[J_{h+1}^{\pi^{k}}(M_*^{(k)}, x_{h+1}^{(k)})~|~\cF_{k,h}] \\
	&-
	\mathbb{E}_{w_{h+1}^{(k)}}\Big[(\widetilde{M}^{(k)}z^{(k)}_{h}+w^{(k)}_{h+1})^\top P_{h+1}(\widetilde{M}^{(k)})(\widetilde{M}^{(k)}z^{(k)}_{h}+w^{(k)}_{h+1})~|~\cF_{k,h}\Big]\\
	&-C_{h+1}(\widetilde{M}^{(k)})\\
	=& \mathbb{E}_{w_{h+1}^{(k)}}[J_{h+1}^{\pi^{k}}(M_*^{(k)}, x_{h+1}^{(k)})~|~\cF_{k,h}] 
	- J_{h+1}^{\pi^{k}}(M_*^{(k)}, x_{h+1}^{(k)}) + J_{h+1}^{\pi^{k}}(M_*^{(k)}, x_{h+1}^{(k)})\\
	&-\big(\widetilde{M}^{(k)}z^{(k)}_{h}\big)^\top P_{h+1}(\widetilde{M}^{(k)}) \big(\widetilde{M}^{(k)}z^{(k)}_{h}\big) - C_{h+1}(\widetilde{M}^{(k)}) \\
	&-\mathbb{E}_{w_{h+1}^{(k)}}\Big[\big(w^{(k)}_{h+1}\big)^\top P_{h+1}(\widetilde{M}^{(k)})w^{(k)}_{h+1}~\big|~\cF_{k,h}\Big]\\
	=&\delta_h^{(k)}+ J_{h+1}^{\pi^{k}}(M_*^{(k)}, x_{h+1}^{(k)})-\big(\widetilde{M}^{(k)}z^{(k)}_{h}\big)^\top P_{h+1}(\widetilde{M}^{(k)}) \big(\widetilde{M}^{(k)}z^{(k)}_{h}\big) - C_{h+1}(\widetilde{M}^{(k)})\\
	&-\mathbb{E}_{w_{h+1}^{(k)}}\Big[\big(x_{h+1}^{(k)}-M_*^{(k)}z^{(k)}_h\big)^\top P_{h+1}(\widetilde{M}^{(k)})\big(x_{h+1}^{(k)}-M_*^{(k)}z^{(k)}_h\big)~\big|~\cF_{k,h}\Big]\\
	=&\delta_h^{(k)}+ J_{h+1}^{\pi^{k}}(M_*^{(k)}, x_{h+1}^{(k)})-\big(\widetilde{M}^{(k)}z^{(k)}_{h}\big)^\top P_{h+1}(\widetilde{M}^{(k)}) \big(\widetilde{M}^{(k)}z^{(k)}_{h}\big) - C_{h+1}(\widetilde{M}^{(k)})\\
	&-\mathbb{E}_{w_{h+1}^{(k)}}\Big[\big(x_{h+1}^{(k)}\big)^\top P_{h+1}(\widetilde{M}^{(k)})\big(x_{h+1}^{(k)}\big)~\big|~\cF_{k,h}\Big]
	+\big(M_*^{(k)}z^{(k)}_h\big)^\top P_{h+1}(\widetilde{M}^{(k)})\big(M_*^{(k)}z^{(k)}_h\big)\\
	=&\delta_h^{(k)} + \delta_h^{'(k)}+\delta_h^{''(k)}+J_{h+1}^{\pi^{k}}(M_*, x_{h+1}^{(k)})-J_{h+1}^{*}(\widetilde{M}^{(k)}, x_{h+1}^{(k)})~~~~~~~~~~~~~~~~~~~~~~~~~~
\end{align*}
where 
\begin{align}\delta_h^{(k)} &= \mathbb{E}_{w_{h+1}^{(k)}}[J_{h+1}^{\pi^{k}}(M_*^{(k)}, x_{h+1}^{(k)})~|~\cF_{k,h}] - J_{h+1}^{\pi^{k}}(M_*^{(k)}, x_{h+1}^{(k)})\\\delta_h^{'(k)}&=\big(x_{h+1}^{(k)}\big)^\top P_{h+1}(\widetilde{M}^{(k)})\big(x_{h+1}^{(k)}\big)-\mathbb{E}_{w_{h+1}^{(k)}}\Big[\big(x_{h+1}^{(k)}\big)^\top P_{h+1}(\widetilde{M}^{(k)})\big(x_{h+1}^{(k)}\big)~\big|~\cF_{k,h}\Big]\\\delta_h^{''(k)}&=\big(M_*^{(k)}z^{(k)}_h\big)^\top P_{h+1}(\widetilde{M}^{(k)})\big(M_*^{(k)}z^{(k)}_h\big)-\big(\widetilde{M}^{(k)}z^{(k)}_{h}\big)^\top P_{h+1}(\widetilde{M}^{(k)}) \big(\widetilde{M}^{(k)}z^{(k)}_{h}\big).\end{align}
By induction, we have
$$
\sum_{k'=1}^{k}\widetilde{\Delta}_1^{(k)}\le\sum_{k'=1}^{k}\sum_{h=1}^{H-1}\big(\delta_h^{(k)}+\delta_h^{'(k)}+\delta_h^{''(k)}\big).
$$
Notice that $\delta^{(k)}_h$ and $\delta_h^{(k)}$ are Martingale difference adapted to $\cF_{k,h}$. We can well bound the sum of them via Azuma's inequality. 
\begin{lem}
	\label{lem:jtbound}
	For all $h\in [H]$,  $|J_{h}^{\pi^{k}}(M_*^{(k)}, x_{h}^{(k)})|\le (H-h+1)\cdot c_q\cdot c_x.$
\end{lem}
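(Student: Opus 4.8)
The plan is to prove this purely by bounding the value function as an expectation of a finite sum of uniformly bounded per‑stage terms; no structural property of $\pi^k$ is needed beyond the fact that the trajectory it induces only visits state–action pairs inside $\cX\times\cU$, and the same bound holds for $J_h^{\pi}$ for any policy $\pi$ with that property.

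First, since $Q_{h'},R_{h'}\succ 0$ and $Q_f\succeq 0$, every immediate cost along a trajectory is non‑negative, so $J_h^{\pi^k}\big(M_*^{(k)},x_h^{(k)}\big)\ge 0$ and it suffices to establish the matching upper bound. Expanding the definition,
\[
J_h^{\pi^k}\big(M_*^{(k)},x_h^{(k)}\big)=\EE\Big[\sum_{h'=h}^{H-1}\big(x_{h'}^\top Q_{h'}x_{h'}+u_{h'}^\top R_{h'}u_{h'}\big)+x_H^\top Q_f x_H\ \Big|\ x_h=x_h^{(k)}\Big],
\]
which is an expectation of a sum of $H-h+1$ terms. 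For each $h'<H$, $x_{h'}^\top Q_{h'}x_{h'}+u_{h'}^\top R_{h'}u_{h'}=y_{h'}^\top\mathrm{diag}(Q_{h'},R_{h'})y_{h'}\le\max\{\|Q_{h'}\|_2,\|R_{h'}\|_2\}\cdot\big(\|x_{h'}\|_2^2+\|u_{h'}\|_2^2\big)$, and $x_H^\top Q_f x_H\le\|Q_f\|_2\|x_H\|_2^2$. I would bound $\|Q_{h'}\|_2\le\|P_{h'}(M_*^{(k)})\|_2\le c_q$ and $\|Q_f\|_2=\|P_H(M_*^{(k)})\|_2\le c_q$, using $Q_{h'}\preceq P_{h'}(M_*^{(k)})$ — immediate from the Riccati recursion~(\ref{eqn:ph}), since the subtracted term there is positive semidefinite by a Schur‑complement argument — together with the first bullet of Assumption~\ref{asmp:regularity} applied to $M=M_*^{(k)}=M_{\Theta_*,C^{(k)},D^{(k)}}$; the bound $\|R_{h'}\|_2\le c_q$ is likewise subsumed by that assumption. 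Finally, each state–action pair visited by the algorithm lies in $\cX\times\cU$, so $\|x_{h'}\|_2^2+\|u_{h'}\|_2^2\le c_x^2$ by the fourth bullet of Assumption~\ref{asmp:regularity}. Hence every term is at most $c_q c_x^2$, and summing the at most $H-h+1$ of them gives $J_h^{\pi^k}\big(M_*^{(k)},x_h^{(k)}\big)\le (H-h+1)c_q c_x^2$, which is the asserted bound (up to writing $c_x$ for $c_x^2$). An equivalent route is backward induction on $h$ from $H$ down to $h$ via the Bellman expansion, with base case $J_H^{\pi^k}=x_H^\top Q_f x_H$.

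The argument is essentially calculation‑free, so there is no real obstacle; the only point needing a line of justification is that the trajectory run by the algorithm indeed stays in $\cX\times\cU$ so that Assumption~\ref{asmp:regularity} applies to the visited points. The states lie in $\cX$ by the definition of the dynamics, and when the action set is unconstrained one controls $\|u_{h'}\|_2=\|K_{h'}(\widetilde{M}^{(k)})x_{h'}\|_2$ through $\|K_{h'}(\widetilde{M}^{(k)})\|_2$, which is bounded in terms of $c_q$, the operator norms of the $A$- and $B$-blocks of $\widetilde{M}^{(k)}$, and $\|R_{h'}^{-1}\|_2$, using that $\widetilde{\Theta}^{(k)}$ is selected from $\cC^{(k)}$ and hence satisfies $\|P_{h'+1}(\widetilde{M}^{(k)})\|_2\le c_q$ by~(\ref{eq:confidence-ball}).
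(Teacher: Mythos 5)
Your argument is essentially the paper's: the paper proves this by backward induction on $h$ via the Bellman expansion, bounding each stage cost by $c_q c_x$ using $\|P_h\|_2\le c_q$ and the fourth bullet of Assumption~\ref{asmp:regularity}, which is the same per-stage-bound-and-sum computation you carry out directly (and you note the induction route as equivalent). Your observation that the bound naturally comes out as $(H-h+1)c_qc_x^2$ rather than $(H-h+1)c_qc_x$ matches an inconsistency already present in the paper's own proof, whose base case reads $x_H^\top Q_H x_H\le c_qc_x^2$.
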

\begin{proof} Prove by induction on $h$. The base case  $J_{H}^{\pi^{k}}(M_*^{(k)}, x_{H}^{(k)})=x_{H}^{(k)\top} Q_H x_{H}^{(k)}\le c_qc_x^2$ holds straightforwardly.  Consider an arbitrary $h<H$, we have
	$$
	J_{h}^{\pi^{k}}(M_*^{(k)}, x_h^{(k)}) =x^{(k)\top}_hQ_hx_h^{(k)} + u^{(k)\top}_hR_hu_h^{(k)}  +  \mathbb{E}_{w_{h+1}^{(k)}}[J_{h+1}^{\pi^{k}}(M_*^{(k)}, x_{h+1}^{(k)})~|~\cF_{k,h}]\le c_q c_x +(H-h)\cdot c_q\cdot c_x
	$$
	as desired. 
\end{proof}
\begin{lem}
	\label{lem:jhbound}
	For all $x\in X$, we have $\one_{E_K}|J_{h}^{*}(\widetilde{M}^{(k)}, x)|\le c_qc_x$.
\end{lem}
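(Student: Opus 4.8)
The plan is to unfold the closed form of the optimal value and bound its two constituents---a quadratic form in the current state and an accumulated-noise constant---using the operator-norm control that membership in the confidence set provides. First I would observe that, on the good event $E_K$, the set $\cC^{(k)}$ is nonempty (it contains $\Theta_*$), so the minimizer $\widetilde\Theta^{(k)}$ in Equation~\ref{eq:solve-fou} is well defined and, being feasible, satisfies $\widetilde\Theta^{(k)}\in\cC^{(k)}$; by the second clause of Equation~\ref{eq:confidence-ball} this yields $\|P_h(\widetilde M^{(k)})\|_2\le c_q$ for every $h\in[H]$. Off $E_K$ the indicator $\one_{E_K}$ vanishes and there is nothing to prove, so it suffices to argue on $E_K$.

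Next I would invoke Equation~\ref{eq:optimal-value} to write $J_h^*(\widetilde M^{(k)},x)=x^\top P_h(\widetilde M^{(k)})x+C_h(\widetilde M^{(k)})$. By a standard induction on the Riccati recursion in Equation~\ref{eqn:ph} (with $P_H=Q_H\succeq 0$ and $Q_h,R_h$ positive definite), every $P_h(\widetilde M^{(k)})\succeq 0$, so both terms are nonnegative and it remains only to upper bound them. For the quadratic term, $x^\top P_h(\widetilde M^{(k)})x\le\|P_h(\widetilde M^{(k)})\|_2\,\|x\|_2^2\le c_q c_x^2$, using the previous step together with the boundedness of the state space in Assumption~\ref{asmp:regularity}. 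For the constant term, unrolling its recursion gives $C_h(\widetilde M^{(k)})=\sum_{h'=h+1}^{H}\mathbb{E}\big[w_{h'}^\top P_{h'}(\widetilde M^{(k)})w_{h'}\big]$, and since the noise covariance is $I_d$ each summand equals $\mathrm{tr}\big(P_{h'}(\widetilde M^{(k)})\big)\le d\,c_q$. Adding the two estimates gives $0\le\one_{E_K}J_h^*(\widetilde M^{(k)},x)\le c_q c_x^2+(H-h)d\,c_q$, which is the claimed bound up to the polynomial dependence on $H,d,c_q,c_x$ that is tracked throughout the analysis and ultimately absorbed into the problem-dependent constants of Theorem~\ref{thm:main} (matching the slightly loose bookkeeping already used for Lemma~\ref{lem:jtbound}).

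The only point that really needs care---and the reason the indicator $\one_{E_K}$ appears in the statement---is guaranteeing that $P_h(\widetilde M^{(k)})$ exists and has a controlled operator norm; this is exactly what membership in $\cC^{(k)}$ buys, and it is why the confidence set was defined to carry the extra constraint $\|P_h(M_{\Theta,C,D})\|_2\le c_q$ alongside the ellipsoidal constraint on $\Theta$. Everything else is a short computation: positive semidefiniteness of the Riccati iterates, Cauchy--Schwarz for the quadratic form, and the identity $\mathbb{E}[w^\top P w]=\mathrm{tr}(P)$ for the identity-covariance noise. I do not anticipate any genuine obstacle beyond careful bookkeeping of the constants.
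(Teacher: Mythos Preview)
Your proposal is correct and is exactly the argument the paper has in mind: the paper's own proof is the single line ``Follows from Assumption~\ref{asmp:regularity},'' and your write-up is the natural unpacking of that sentence---on $E_K$ the optimizer $\widetilde\Theta^{(k)}$ lies in $\cC^{(k)}$, so the extra constraint in Equation~\ref{eq:confidence-ball} forces $\|P_h(\widetilde M^{(k)})\|_2\le c_q$, after which the quadratic term and the trace term are bounded straightforwardly. Your observation that the resulting bound is $c_qc_x^2+(H-h)dc_q$ rather than literally $c_qc_x$ is also correct; the paper is simply loose with these constants (the same slip appears in Lemma~\ref{lem:jtbound}, whose base case already gives $c_qc_x^2$), and the discrepancy is harmlessly absorbed into the polynomial constant $c'_{H,c_q,c_x,c_\Theta,c_w}$ downstream.
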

\begin{proof}
	Follows from Assumption~\ref{asmp:regularity}.
\end{proof}
We are now ready to prove Proposition~\ref{prop:main}.
\begin{proof}[Proof of Proposition~\ref{prop:main}]
	Thus by Azuma's inequality, we have, with probability at least $1-\delta$,
	$$
	\Big|\sum_{k'=1}^{k}\sum_{h=1}^{H-1}\delta^{(k)}_h\Big| \le \sqrt{2kH\cdot\big[(H-h+1)qc_x+c_qc_x\big]^2\cdot \log\frac{2}\delta}.
	$$
	And, with probability at least $1-\delta$,
	$$
	\Big|\sum_{k'=1}^{k}\sum_{h=1}^{H-1}\delta^{'(k)}_h\Big| \le \sqrt{8kH\cdot c_x^2c_q^2\cdot \log\frac{2}\delta}.
	$$
	For $\sum \delta^{''(k)}_h$, we bound it here. 
	\begin{align*}\Big|\sum_{k'=1}^{k}\sum_{h=1}^{H-1}\delta^{''(k)}_h\Big|\le& \sum_{k'=1}^{k}\sum_{h=1}^{H-1}\Big|\delta^{''(k)}_h\Big| =\sum_{k'=1}^{k}\sum_{h=1}^{H-1}\Big|\|P_{h+1}(\widetilde{M}^{(k)})^{1/2}\big(\widetilde{M}^{(k)}y^{(k)}_h\big)\|_2^2 - \|P_{h+1}(\widetilde{M}^{(k)})^{1/2}\big({M}^{*}y^{(k)}_h\big)\|_2^2\Big|\\\le& \sum_{k'=1}^{k}\sum_{h=1}^{H-1} \Big|\big(\|P_{h+1}(\widetilde{M}^{(k)})^{1/2}\big(\widetilde{M}^{(k)}y^{(k)}_h\big)\|_2 - \|P_{h+1}(\widetilde{M}^{(k)})^{1/2}\big({M}^{*}y^{(k)}_h\big)\|_2\big)\\\cdot&\big(\|P_{h+1}(\widetilde{M}^{(k)})^{1/2}\big(\widetilde{M}^{(k)}y^{(k)}_h\big)\|_2 + \|P_{h+1}(\widetilde{M}^{(k)})^{1/2}\big({M}^{*}y^{(k)}_h\big)\|_2\big)\Big|\\\le & \Big[\sum_{k'=1}^{k}\sum_{h=1}^{H-1} \Big(\|P_{h+1}(\widetilde{M}^{(k)})^{1/2}\big(\widetilde{M}^{(k)}y^{(k)}_h\big)\|_2 - \|P_{h+1}(\widetilde{M}^{(k)})^{1/2}\big({M}^{*}y^{(k)}_h\big)\|_2\Big)^2 \Big]^{1/2}\\\cdot &\Big[\sum_{k'=1}^{k}\sum_{h=1}^{H-1} \Big(\|P_{h+1}(\widetilde{M}^{(k)})^{1/2}\big(\widetilde{M}^{(k)}y^{(k)}_h\big)\|_2 + \|P_{h+1}(\widetilde{M}^{(k)})^{1/2}\big({M}^{*}y^{(k)}_h\big)\|_2\Big)^2\Big]^{1/2} \end{align*}
	Notice that $\|P_{h+1}(\widetilde{M}^{(k)})^{1/2}\big(\widetilde{M}^{(k)}y^{(k)}_h\big)\|_2\le c_qc_xc_{\Theta}$ and $\|P_{h+1}(\widetilde{M}^{(k)})^{1/2}\big({M}^{*}y^{(k)}_h\big)\|_2\le c_qc_x$.
	Hence
	\begin{align*}\Big[\sum_{k'=1}^{k}\sum_{h=1}^{H-1}& \Big|\Big(\|P_{h+1}(\widetilde{M}^{(k)})^{1/2}\big(\widetilde{M}^{(k)}y^{(k)}_h\big)\|_2 + \|P_{h+1}(\widetilde{M}^{(k)})^{1/2}\big({M}^{*}y^{(k)}_h\big)\|_2\Big)^2\Big]^{1/2} \\&\le \sqrt{kH\cdot (c_qc_x(1+c_{\Theta}))^2}.\end{align*}
	Moreover, by triangle inequality, we have
	\begin{align*}\Big|&\|P_{h+1}(\widetilde{M}^{(k)})^{1/2}\big(\widetilde{M}^{(k)}y^{(k)}_h\big)\|_2 - \|P_{h+1}(\widetilde{M}^{(k)})^{1/2}\big({M}^{*}y^{(k)}_h\big)\|_2\Big|\\&\le \|P_{h+1}(\widetilde{M}^{(k)})^{1/2}\big(\widetilde{M}^{(k)} - {M}^{*}\big)y^{(k)}_h\|_2\\&\le c_q\|\big(\widetilde{M}^{(k)} - {M}^{*}\big)y^{(k)}_h\|_2\\&\le c_q\|\big(\widetilde{M}^{(k)} - {M}^{*}\big)(V^{(k)})^{1/2}(V^{(k)})^{-1/2}y_{h}^{(k)}\|_2\\&\le c_q\|\big(\widetilde{M}^{(k)} - {M}^{*}\big)(V^{(k)})^{1/2}\|_2\|(V^{(k)})^{-1/2}y_{h}^{(k)}\|_2\\&\le c_q\cdot\sqrt{\beta^{(k)}}\cdot \|(V^{(k)})^{-1/2}y_{h}^{(k)}\|_2. \end{align*}
	By Assumption 2, we also have $ \|(V^{(k)})^{-1/2}y_{h}^{(k)}\|_2\le \|(y_{h}^{(k)}\|_2\le \sqrt{c_x}$. Hence,
	\begin{align*}\Big|\|P_{h+1}(\widetilde{M}^{(k)})^{1/2}&\big(\widetilde{M}^{(k)}y^{(k)}_h\big)\|_2 - \|P_{h+1}(\widetilde{M}^{(k)})^{1/2}\big({M}^{*}y^{(k)}_h\big)\|_2\Big|\\&\le c_q\sqrt{c_x}\cdot\sqrt{\beta^{(k)}}\cdot\min\big(\|(V^{(k)})^{-1/2}y_{h}^{(k)}\|_2, 1\big)\end{align*}
	Combining the above equations, we have,
	\begin{align*}\Big|\sum_{k'=1}^{k}\sum_{h=1}^{H-1}\delta^{''(k)}_h\Big|&\le\sqrt{kH\cdot (c_qc_x(1+c_{\Theta}))^2}\cdot c_q\sqrt{c_x}\cdot\sqrt{\beta^{(k)}}\cdot \sqrt{\sum_{k'=1}^{k}\sum_{h=1}^{H-1}\min\big(\|(V^{(k)})^{-1/2}y_{h}^{(k)}\|_2^2, 1\big)}\\&\le2c_x^{3/2}c_q^2c_{\Theta}\cdot \sqrt{\beta^{(k)}}\cdot\sqrt{\sum_{k'=1}^{k}\sum_{h=1}^{H-1}\log\Big(1+\|(V^{(k)})^{-1/2}y_{h}^{(k)}\|_2^2\Big)}\cdot\sqrt{kH}.\end{align*}
	Lastly, by Lemma 8 of \cite{yang2019reinforcement}, we have
	$$
	\sum_{k'=1}^{k}\sum_{h=1}^{H-1}\log\Big(1+\|(V^{(k)})^{-1/2}y_{h}^{(k)}\|_2^2\Big)\le 2H\log\mathrm{det}(V^{(k)}).
	$$
	Together with Lemma \ref{lem:detbound}, we have 
	$$
	2H\log\det(V^{(k)})\le 2Hp\cdot\log\big(1+kHc_x^2/p\big).
	$$
	Overall, we have,
	\begin{align*}\Big|\sum_{k'=1}^{k}\sum_{h=1}^{H-1}\delta^{''(k)}_h\Big|&\le 2c_x^{3/2}c_q^2c_{\Theta}\cdot \sqrt{2Hp\cdot\log\big(1+kHc^2_x/p\big)\cdot (\beta^{(k)})\cdot kH}.\end{align*}
	Putting everything together, with probability at least $1-2\delta$, we have
	\begin{align*}\mathrm{reg}(KH)&\le\sum_{k'=1}^{K}\sum_{h=1}^{H-1}\big(\delta_h^{(k)}+\delta_h^{'(k)}+\delta_h^{''(k)}\big)\\&\le\sqrt{2KH\cdot\big[(H-h+1)c_qc_x+c_qc_x\big]^2\cdot \log\frac{2}\delta}+\sqrt{8KH\cdot c_x^2c_q^2\cdot \log\frac{2}\delta} \\&\quad+2c_x^{3/2}c_q^2c_{\Theta}\cdot \sqrt{2Hp\cdot\log\big(1+KHc^2_x/p\big)\cdot (\beta^{(K)})\cdot KH}\\&\le c_{H}\cdot d^{1/2}p\cdot \log^{3/2}(dKHc^2_x\delta^{-1})\cdot\sqrt{KH},
	\end{align*}
	where $c_H$ is a constant depending on $H, c_q, c_x, c_{\Theta}$ and $c_w$.
\end{proof}

\section{Concrete Choice of the Parameters}\label{sec:setting}
We further augment the state so that the first coordinate is a constant with value $1$.
More specifically, we set the state $x_h = [1; z_h; v_h] \in \mathbb{R}^5$.
We set 
\[
Q_h = 
\left(\begin{array}{c c c}
\|z_h^*\|_2^2 & -z_h^* & 0\\
-z_h^* & I & 0\\
0 & 0 & 0\\
\end{array}
\right)
\]
so that for any state $x_h$, $x_h^TQ_hx_H = \|z_h - z_h^*\|_2^2$. We set $R_h = I_2$
We set
\[
\Theta_*= \left( \begin{array}{c c c c c c c}
1 & 0 & 0  & 0 & 0 & 0 &0 \\
0 & 1 & 0  & 1 & 0 & 0 &0 \\
0 & 0 & 1  & 0 & 1 & 0 &0 \\
0 & 0 & 0  & k & 0 & 1 &0 \\
0 & 0 & 0  & 0 & k & 0 &1 \\
\end{array}
\right),
\]
$C$ to be the $5 \times 5$ identity matrix and $D$ to be $I / m$ with size $2 \times 2$ where $m$ is sampled from the uniform distribution over $[0.1, 10]$, 
to represent the physical law in Equation~\ref{equ:law}.

\end{document}